\newcommand{\indep}{\perp \!\!\! \perp}
\newtheorem{assumption}{Assumption}
\newtheorem{assumptionA}{Assumption}
\DeclareMathAlphabet{\mathsfit}{\encodingdefault}{\sfdefault}{m}{sl}
\SetMathAlphabet{\mathsfit}{bold}{\encodingdefault}{\sfdefault}{bx}{n}
\def\sE{{\mathbb{E}}}
\def\sR{{\mathbb{R}}}
\def\0{{\bf 0}}
\def\1{{\bf 1}}
\newcommand{\Var}{\mathrm{Var}}
\begin{document}

\title{Constructing Synthetic Treatment Groups without the Mean Exchangeability Assumption}

\author{\name Yuhang Zhang  \\
       \addr  {\email  zhangyh19@pku.edu.cn} \\
       School of Mathematical Sciences\\
       Peking University\\
       Beijing, China
       \AND
       \name Yue Liu  \\
       \addr {\email liuyue\_stats@ruc.edu.cn} \\
       Center for applied statistics and school of statistics\\
       Renmin University of China\\
       Beijing, China
       \AND
       \name Zhihua Zhang  \\
       \addr {\email zhzhang@math.pku.edu.cn} \\
       School of Mathematical Sciences\\
       Peking University\\
       Beijing, China}


\maketitle

\begin{abstract}

The purpose of this work is to transport the information from multiple randomized controlled trials to the target population where we only have the control group data. Previous works rely critically on the mean exchangeability assumption. However, as pointed out by many current studies, the mean exchangeability assumption might be violated. Motivated by the synthetic control method, we construct a synthetic treatment group for the target population by a weighted mixture of treatment groups of source populations. We estimate the weights by minimizing the conditional maximum mean discrepancy between the weighted control groups of source populations and the target population. We establish the asymptotic normality of the synthetic treatment group estimator based on the sieve semiparametric theory. Our method can serve as a novel complementary approach when the mean exchangeability assumption is violated. Experiments are conducted on synthetic and real-world datasets to demonstrate the effectiveness of our methods.
\end{abstract}

\begin{keywords}
  treatment effect estimation, mean exchangeability assumption, synthetic control, conditional maximum mean discrepancy, sieve extremum estimation
\end{keywords}

\section{Introduction}
The crucial goal of causal inference is to generalize the scientific findings beyond the source populations to the target population. The question of transportability touches on the heart of scientific methodology \citep{pearl2019new}. For example, we want to assess the effectiveness of a new treatment such as a new drug or a new policy, compared with a traditional treatment in a target population of interest. Suppose that we have conducted several randomized controlled trials (RCTs) in different situations. However, it is infeasible to carry out an RCT in the target population due to some practical issues such as the budget constraint. Instead, we only have some historical data about the control group from the target population.
Although we can estimate the average treatment effect in each trial, the target population has some different properties from the population underlying any one trial due to the inclusion and exclusion criteria. Without the treatment group data from the target population, the problem of how to transport the information from the source populations to the target population becomes a very challenging issue.

Most of the previous works (e.g., \citep{Rudolph2017RobustEO,Dahabreh2019GeneralizingCI,Dahabreh2020ExtendingIF,li2021improving}) depend critically on the mean exchangeability assumption that the target population and the source populations share the same potential outcome expectation. Under this assumption, one can simply pool the data from the source populations and estimate the conditional average treatment effect from the pooled dataset \citep{dahabreh2020towards}. However, many studies (e.g., \citep{gechtery2015generalizing,huitfeldt2018choice,dahabreh2019sensitivity,dahabreh2022global}) show that the mean exchangeability assumption is controversial and needs to be supported by the substantive background knowledge. When this assumption is violated by the empirical evidence, the heterogeneity between the source populations and the target population leads to the site selection bias \citep{allcott2015site} which prevents us from applying the classical transportability method. 
To overcome this challenge, we need to find an alternative connection between the source populations and the target population. Otherwise, if the target population differs from the source populations in an arbitrary way, we could not utilize the information from the source populations even if we have an infinite amount of data from the source populations.

When the outcome variable is binary, \citet{huitfeldt2018choice} and \citet{cinelli2021generalizing} assumed the homogeneity of COST parameters (the invariance of probabilities of causation) to transport the information. In this paper we generalize their methods from binary outcomes to continuous outcomes and propose alternative identification assumptions to the mean exchangeability assumption. The motivation comes from the deep connection between the transportability problem and the synthetic control method \citep{Abadie2010SC}. These new identification assumptions serve as a bridge between the source populations and the target population, enabling information to be transported. Based on them, we create a synthetic treatment group for the target population by weighting the treatment groups of the source populations. By combining the synthetic treatment group and the control group data from the target population, we can estimate the average treatment effect under the violation of the mean exchangeability assumption. 

The contributions in this paper can be summarized as follows:
\begin{itemize}[nosep]
	\item We explore the deep connection between the transportability problem and the synthetic control method,  leading to two alternative identification assumptions. Based on them, we can generalize the identification formula that depends on the mean exchangeability assumption.
	\item We propose a complementary approach to achieve transportability under the violation of the mean exchangeability assumption by constructing a synthetic treatment group.
	\item We estimate the weights by minimizing the conditional maximum mean discrepancy (CMMD) between the weighted control group distribution of the source populations and the target population, which circumvents the difficult problem of estimating the conditional probability density function. 
	\item Formulating the problem as a sieve semiparametric two-step GMM estimation \citep{chen2015sieve}, we derive the asymptotic distribution of the treatment group expectation estimator and construct the asymptotic variance estimator.
\end{itemize}
The remainder of this paper is organized as follows. In section 2 we present a review of the previous literature. In Section 3 we discuss the identification assumptions in the previous work. In Section 4 we propose two alternative identification assumptions and the identification formula with a summary in Table \ref{table:summary of the identification assumptions}. In Section 5 we introduce the weight estimation method by minimizing the CMMD. In Section 6 we apply the sieve semiparametric two-step estimation to establish the asymptotic normality of the synthetic treatment group estimator. We present the experimental results in Section 7 and finally conclude our work.

\section{Related Work}

As far as we know, when the outcome variable is continuous, there is no prior work that identified the average treatment effect of the target population under the violation of the mean exchangeability assumption. \citet{gechtery2015generalizing} assumed the invariance of copula functions to construct a bound for the average treatment effect of the target population.  

When the outcome variable is binary, in order to transport the information under the violation of the mean exchangeability assumption, \citet{huitfeldt2018choice} proposed the counterfactual outcome transition state (COST) parameters which were also called by \citet{tian2000probabilities} and \citet{cinelli2021generalizing} as the probabilities of causation. Henceforth, we will use these two terms interchangeably.  \citet{shingaki2023probabilities} extended the result of \citet{cinelli2021generalizing} from the experimental study to the observational study. In addition, \citet{colnet2023risk} also discussed the COST parameters in their Lemma 7, Definitions 6 and 7. According to \citet{huitfeldt2019effect}, the important limitation of COST parameters is that they are only defined with respect to the binary outcome and the generalization to the continuous outcome is nontrivial. To the best of our knowledge, our work is the first attempt to generalize the COST parameters from the binary outcome variable to the continuous outcome variable and incorporate the covariate information. 

When the violation of the mean exchangeability assumption is caused by the unobserved effect modifiers, \citet{nguyen2017sensitivity,nguyen2018sensitivity}, \citet{nie2021covariate} and \citet{huang2022sensitivity} conducted the sensitivity analysis to evaluate the possible impact of the violation. Without requiring detailed background knowledge about the relationship between the unobserved effect modifiers and the observed variables, \citet{dahabreh2019sensitivity,dahabreh2022global} used the bias function and the exponential tilt model to conduct the sensitivity analysis. 

The difference-in-differences method (DID) \citep{Card1994MinimumWA} and the synthetic control method (SCM) \citep{Abadie2010SC} are two widely used methods of treatment effect estimations in comparative case studies. Their identification assumptions are closely connected with the ones underlying the transportability method. In Section \ref{section:conditional parallel trends assumption}, we will discuss the connection between the classical transportability method and the difference-in-differences method. Furthermore, we will elaborate on the relationship between our proposed method and the synthetic control method. 

Most of the studies on the synthetic control method deal with the panel data. Only a few works (e.g., \citep{wong2015three,gunsilius2020distributional,Chen2020ADS}) analyze the repeated cross-sectional data. \citet{wong2015three} estimated the weights by moment matching using the first moment information with multiple pretreatment periods. The distributional synthetic control method constructed the linear mixture of the donor pool’s quantile functions as the Wasserstein barycenter to estimate the treatment unit’s counterfactual quantile function \citep{gunsilius2020distributional}. \citet{shi2022assumptions} investigated the assumptions of the synthetic control from the perspective of the individual level data. 

Other than the synthetic control method, there are many works that estimate the target quantity by creating a weighted average of counterparts from the source populations. However, in most situations, their weights are of population level and do not make use of the covariate information. \citet{hasegawa2017myth} specified the weight for each source population to match the covariate distribution of the target population and constructed a weighted average treatment effect estimator. \citet{dalalyan2018optimal} constructed a convex combination of multiple probability density functions to estimate the target distribution by minimizing the Kullback-Leibler divergence between the mixture distribution and the empirical distribution.

The weight estimation also plays a key role in the problem of mixture proportion estimation (e.g., \citep{hall1981non,yu2018efficient}), class ratio estimation (e.g., \cite{iyer2014maximum,iyer2016privacy}) and class-prior estimation (e.g., \cite{kawakubo2016computationally,du2014semi}). With access to samples from the target distribution and each component distribution, they estimate the finite dimensional weight vector by minimizing a statistical distance between the weighted component distribution and the target distribution. In contrast, we face a more difficult challenge because we don't have access to the sample directly from each mixture component distribution and we utilize the covariate information. As a result, we need to estimate several infinite dimensional weight functions.

Our work also relates to the problem of parameter estimation by minimizing the maximum mean discrepancy.  \citet{iyer2014maximum}, \citet{kawakubo2016computationally} and \citet{yu2018efficient} estimated the mixture proportion weights by minimizing the maximum mean discrepancy between the target empirical distribution and the weighted empirical component distributions. \citet{cherief2022finite} estimated the target probability distribution function by a linear mixture of probability distribution functions from a known dictionary. \citet{Ren2016ConditionalGM,ren2021improving} adopted this method to train deep generative models. \citet{alquier2020universal} investigated the problem of universal robust regression while leaving the covariate distribution unspecified. 

We employ the sieve semiparametric two-step estimation method to establish the asymptotic normality of the synthetic treatment group estimator \citep{chen2015sieve}. The sieve extremum estimation is widely used for the semiparametric models where the parameter of interest is finite dimensional and the nuisance parameter is infinite dimensional \citep{chen2007large}. \citet{chen2003estimation} and \citet{ichimura2010characterization} also investigated the problem of semiparametric two-step estimation but they only derived the closed-form solution of the asymptotic variance estimator for some special examples. If the nuisance function can be estimated by the conditional moment restriction models in the first step, then one can formulate the problem as the conditional moment restriction model with
different conditioning variables or the sequential moment restrictions model. Consequently, one can use the sieve minimum distance estimator to estimate the parameter of interest and establish the asymptotic normality of the resulting estimator \citep{ai2007estimation,ai2012semiparametric}.

Many studies have made great contributions to the field of transportability using the causal graphical model \citep{pearl2022external}. However, in this paper we mainly approach the transportability problem from the perspective of the potential outcome framework. Please refer to \citet{degtiar2023review} and \citet{lesko2020target} for literature reviews about the problem of transportability.

\section{Preliminaries}
In this section we will first introduce the problem setup and the notations. Then we will review the main identification assumptions underlying the previous works.
\subsection{Problem Setup}
Suppose that we conduct $N$ RCTs to estimate the effect of a new treatment compared with the traditional treatment. For the target population of interest, we only have access to the historical control group data. Let $ D $ denote the population indicator with $ D=0 $ for the target population and $D=1, \ldots, N$ for the source populations. We refer to the population underlying the $i$-th RCT as the $i$-th source population. We denote by $A$  the treatment assignment indicator with $A=1$ for the treatment group and $A=0$ for the control group. Let $Y\in \mathbb{R}$ and $X \in \mathbb{R}^d$ denote the outcome and the covariate. We refer to $ Y(a) $ as the potential outcome under the treatment assignment $ a\in \{0,1\} $. For the $i$-th trial, the data consists of independent and identically distributed random vectors $(D_j=i, A_j, X_j, Y_j)$, $j=1, \ldots, m_i+n_i$, where $m_i$ is the number of units in the treatment group and $n_i$ is the number of units in the control group. For the target population, the sample consists of independent and identically distributed random vectors $(D_j=0, A_j=0, X_j, Y_j)$, $j=1, \ldots, n_0$, where $n_0$ is the sample size of the target population. Note that in the target population, every unit is in the control group. 
The average treatment effect of the target population can be defined as:
\begin{align*} 
	{ATE} \triangleq {\mathbb E}[Y(1)-Y(0)|D=0] = {\mathbb E}[Y(1)|D=0] - {\mathbb E} [Y(0)|D=0].
\end{align*}
Since all the units in the target population belong to the control group, we have ${\mathbb E}[Y(0)|D=0]={\mathbb E}[Y|D=0]$. However, due to the absence of the treatment group data in the target population, the challenge is to address the identifiability assumptions
about the treatment group expectation $\theta_0={\mathbb E}[Y(1)|D=0]$.


Throughout this paper, we maintain the stable unit treatment value assumption (SUTVA) that there is no interference between study units and different versions of the treatment are irrelevant. For any unit in the $i$-th trial, if $A=a$, then $Y=Y(a)$. Every unit in the target population is in the control group. Furthermore, one always makes the strong ignorability assumption \citep{rosenbaum1983central}, that is, 
\[
Y(a)\indep A|[X, D=i] \mbox{ and }  \Pr(A=a|X=x,D=i)>0, \mbox{ where } a\in \{0,1\}   \mbox{ and } i\in\{1, \ldots, N\}.
\]
\subsection{Main Assumptions of Previous Works}\label{section: Key assumptions of previous works}
Besides the SUTVA assumption and the strong ignorability assumption, most of the previous works depend critically on the mean exchangeability assumption that the conditional expectation of potential outcome remains the same between the $i$-th trial and the target population:
\begin{align*} 
	{\mathbb E}[Y(a)|X=x,D=i]={\mathbb E}[Y(a)|X=x,D=0],\mbox{ for } a\in \{0,1\}   \mbox{ and } i\in\{1, \ldots, N\}.
\end{align*}
Many studies (e.g., \cite{hotz2005predicting,lesko2017generalizing,lu2019causal,Li2020TargetPS,Egami2020ElementsOE}) employ a stronger distribution exchangeability assumption:
\begin{align*} 
	Y(a)\indep D|X.	
\end{align*}
Some previous works (e.g., \citep{allcott2015site,dahabreh2020towards}) relax the mean exchangeability assumption and make a weaker assumption: the conditional exchangeability in measure assumption. They assume that the source populations and the target population share the same conditional average treatment effect:
\begin{align}\label{equation: conditional exchangeability in measure}
	{\mathbb E}[Y(1)-Y(0)|X=x,D=i]={\mathbb E}[Y(1)-Y(0)|X=x, D=0], \mbox{ for } i\in\{1, \ldots, N\}. 	
\end{align}
In some situations, the conditional exchangeability in measure assumption might still be violated. For example, when different hospitals have different medical conditions or there are some unmeasured effect modifiers whose distributions vary among different populations, the conditional average treatment effects may differ from site to site \citep{nie2021covariate,huang2022sensitivity}.

When the outcome variable $Y$ is binary, the counterfactual probabilities: 
\begin{align*}
	\Pr(Y(1)=a | Y(0)=b) \mbox{ with } a,b\in\{0,1\},
\end{align*}
are named by \citet{huitfeldt2018choice} as the COST parameters. They are also named by \citet{tian2000probabilities} as the probabilities of causation. According to \citet{huitfeldt2019effect}, the COST parameters overcome many shortcomings of other effect measures (e.g., the risk difference based on the mean exchangeability assumption). According to Table 1 of \citet{huitfeldt2018choice}, the COST parameters have no zero constraints or baseline risk dependence, are collapsible over baseline covariates and do not lead to logically invalid results. Besides, they also have the scale invariant property. In contrast, the scale dependence of the mean exchangeability assumption restricts its use in settings with non-Gaussian and discrete outcomes according to \citet{ding2019bracketing}. 

\citet{huitfeldt2018choice} reasoned that the homogeneity of the COST parameters between the source populations and the target population is more plausible than the mean exchangeability assumption under many data generating mechanisms. Analogously, \citet{cinelli2021generalizing} assumed the invariance of probabilities of causation across populations in place of the mean exchangeability assumption. 

The assumptions of the homogeneity of the COST parameters or the invariance of probabilities of causation can be formulated as
\begin{align*}
	Y(1) \indep D|Y(0).
\end{align*}
According to \citet{huitfeldt2019effect}, the COST parameters have only been limited to the binary outcome variable and it is difficult to generalize them to the continuous outcome variables. The reason is that they need to additionally rely on the monotonicity assumption in order to identify the COST parameters. The monotonicity assumption can be expressed as
\begin{align*}
	Y(1)\geq Y(0).
\end{align*}

\citet{cinelli2021generalizing} constructed bounds for the probabilities of causation when the monotonicity assumption is violated. By combining the information from multiple source populations, they identified the probabilities of causation without the monotonicity assumption. They estimated the COST parameters by solving a system of linear equations implied by the law of total probability. Nevertheless, it is difficult to generalize their method to the continuous outcome variables because it would require an infinite number of source populations which is infeasible in practice. To make things worse, for the continuous outcome variable, even the monotonicity assumption is too weak to point-identify the COST parameters. 


\subsection{Connections Between the Conditional Exchangeability in Measure Assumption and the Conditional Parallel Trends Assumption}\label{section:conditional parallel trends assumption}

The conditional exchangeability in measure assumption has a strong connection with the conditional parallel trends assumption \citep{nie2019nonparametric,callaway2021difference} underlying the difference-in-differences method (see Footnote 8 of \citet{gechtery2015generalizing}). To illustrate this, we will first introduce the difference-in-differences method following the example in \citet{nie2019nonparametric}. Suppose that there are two comparable cities. The first city is subject to a policy intervention while the second city is not. We observe $n$ i.i.d random vectors $\left(D_i, T_i, X_i, Y_i\right)$. The city indicator $D_i \in\{0,1\}$ denotes whether the $i$-th individual is in the exposed or control city ($D_i=0$ for the exposed city and $D_i=1$ for the control city). $T_i \in\{0,1\}$ denotes the time indicator ($T_i=0$ for the pre-treatment period and $T_i=1$ for the post-treatment period). $X_i$ is the covariate variable and $Y_i$ is the outcome of interest. Let $Y_{T=t}(0)$ and $Y_{T=t}(1)$ denote the control and treated potential outcomes at time $t$.

Because the counterfactual expectation $\mathbb{E}\left[Y_{T=1}(0) \mid X=x, D=0\right]$ is unobserved, the DID method imposes the conditional parallel trends assumption that the counterfactual control potential outcome of the treated city will follow the parallel trend with the control city:
	\begin{align*}
		&\mathbb{E}\left[Y_{T=1}(0)| X=x, D=1\right]-\mathbb{E}\left[Y_{T=0}(0)| X=x, D=1\right] \\
		& = \mathbb{E}\left[Y_{T=1}(0)| X=x, D=0\right]-\mathbb{E}\left[Y_{T=0}(0)| X=x, D=0\right].
	\end{align*}
If we substitute the role of treatment indicator $A$ for the time indicator $T$ and suppress the potential outcome notation "$(0)$", then we will find that the conditional exchangeability in measure assumption is mathematically equivalent to the conditional parallel trends assumption in Eqn.~\ref{equation: conditional exchangeability in measure} such that 
\begin{align*}
	&\mathbb{E}\left[Y_{A=1}| X=x, D=1\right]-\mathbb{E}\left[Y_{A=0}| X=x, D=1\right] \\
	& = \mathbb{E}\left[Y_{A=1}| X=x, D=0\right]-\mathbb{E}\left[Y_{A=0}| X=x, D=0\right].
\end{align*}
As a result, any limitations of the conditional parallel trends assumption could also be applied to the conditional exchangeability in measure assumption. When the conditional parallel trends assumption is violated, the difference-in-differences method is no longer valid. Following the same logic, when the conditional exchangeability assumption is violated, the classical transportability method is no longer valid.

\section{Identification Without the Mean Exchangeability Assumption}\label{section: Identification without the mean exchangeability assumption}

According to \citet{gunsilius2020distributional}, the synthetic control method can serve as a natural complementary approach to the difference-in-differences method when the conditional parallel trends assumption is violated. This inspires us to investigate the underlying assumptions of the synthetic control method and propose complementary identification assumptions for the mean exchangeability assumption. 
\subsection{The Invariance of Counterfactual Expectation Transition Mechanism}
The synthetic control method depends on the mean independence assumption \citep{kellogg2021combining,ONeill2016EstimatingCE,angrist2009mostly,ding2019bracketing} that individuals with similar covariates and outcomes in the pre-treatment periods will tend to have similar counterfactual control potential outcomes in the post-treatment period:
	\begin{align*}
		E\left[Y_{T=1}(0)| Y_{T=0}(0)=y,X=x, D=i\right]
		=E\left[Y_{T=1}(0)| Y_{T=0}(0)=y,X=x, D=0\right],
	\end{align*}
	where $i\in\{1, 2, \ldots, N\}.$
	
Following the same logic in Section \ref{section:conditional parallel trends assumption}, we propose the following assumption by substituting the time $T$ by the treatment indicator $A$ and suppressing the common potential outcome notation "$(0)$". 

\begin{assumption}[The Invariance of CETM]
	\label{assumption: the invariance of counterfactual expectation transition mechanism}
	For every covariate $x\in \mathbb{R}^d$ and outcome $y\in \mathbb{R}$, the counterfactual expectation transition mechanism (CETM) remains the same between the source populations and the target population. That is, 
	\begin{align*} 
		{\mathbb E}[Y(1)|Y(0)=y,X=x,D=i]={\mathbb E}[Y(1)|Y(0)=y,X=x,D=0], \mbox{ for }	i\in\{1, \ldots, N\}.
	\end{align*}
\end{assumption}
Rather than concerning about the marginal distributions of $Y(0)$ and $Y(1)$ separately, we assume that the relationship between $Y(0)$ and $Y(1)$ stays invariant between the study populations and the target population. Different from the mean exchangeability assumption, Assumption \ref{assumption: the invariance of counterfactual expectation transition mechanism} not only conditions on the covariates $X$ but also includes the potential outcome $Y(0)$. The counterfactual expectation transition mechanism $\mathbb{E}[Y(1)|Y(0)=y,X=x]$ generalizes the COST parameters from the binary outcome variable to the continuous outcome variable and incorporates the covariate information. 

\citet{huitfeldt2019effect} stated that few examples of data generating processes leading to the homogeneity of an effect measure exist in the literature and finding plausible mechanisms may be infeasible in many situations. Fortunately, according to them, the COST parameter based on Assumption \ref{assumption: the invariance of counterfactual expectation transition mechanism} is a notable exception. According to \citet{cinelli2021generalizing}, if the outcome variable is the product of several independent factors and only some of them are different across populations, then the invariance of probabilities of causation may be entailed. It is consistent with the fact that Assumption \ref{assumption: the invariance of counterfactual expectation transition mechanism} can be implied by the independent causal mechanism assumption of \citet{shi2022assumptions} if the covariate variables $X$ and the control potential outcomes $Y(0)$ can serve as the causes. The independent causal mechanism assumption entails that the expectation of the treatment potential outcome conditional on the control potential outcome and the baseline covariate is independent of which population the individual is from. 

Assumption \ref{assumption: the invariance of counterfactual expectation transition mechanism} is more plausible than the mean exchangeability assumption in some situations because the potential outcome $Y(0)$ contains additional information other than that of the covariate $X$. For example, patients in different hospitals with similar baseline covariates $X$ and control group potential outcomes $Y(0)$ tend to have access to similar level of medical care service and baseline health conditions, which will lead to similar treatment group potential outcomes $Y(1)$. Assumption \ref{assumption: the invariance of counterfactual expectation transition mechanism} naturally holds if the potential outcome $Y(1)$ is generated by the potential outcome $Y(0)$ and the covariate $X$ through a common mechanism $m(\cdot)$ shared by the source populations and the target population. That is, 
\begin{align*} 
	Y(1)=m(Y(0),X)+\epsilon, \mbox{ where } {\mathbb E}[\epsilon|Y(0),X,D]=0.		
\end{align*}

When the outcome variable is binary, there are several real data applications based on Assumption \ref{assumption: the invariance of counterfactual expectation transition mechanism} from previous works. \citet{cinelli2021generalizing} studied three experiments to determine the effects of Vitamin A supplementation on childhood mortality across different regions. For the problem of non-compliance, \citet{takahata2018identification} adopted Assumption 1 with $D$ as the compliance indicator to estimate $\mathbb{E}[Y(1)-Y(0)|Y(0)]$. They investigated the National Job Training Partnership Act study and found that the empirical findings were consistent with Assumption \ref{assumption: the invariance of counterfactual expectation transition mechanism}. In addition, there are numerous examples that illustrate Assumption \ref{assumption: the invariance of counterfactual expectation transition mechanism} from the literature. \citet{huitfeldt2018choice,huitfeldt2019effect,huitfeldt2021shall} studied the effect of antibiotic treatment on mortality in patients with a specific bacterial infection, the adverse effects of Codeine and the effect of treatment with Penicillin on the risk of anaphylaxis. They believed that applications of Assumption \ref{assumption: the invariance of counterfactual expectation transition mechanism} could occur with some frequency when studying the effectiveness and safety of pharmaceuticals. In addition, \citet{cinelli2021generalizing} presented an illustrative example about the effect of playing Russian Roulette on mortality. 

Figure \ref{fig: causal graph} depicts the causal graph of a data generating process which satisfies Assumption \ref{assumption: the invariance of counterfactual expectation transition mechanism}. There are other causal graphs which also imply Assumption \ref{assumption: the invariance of counterfactual expectation transition mechanism}. Please refer to Figure 3 of \citet{cinelli2021generalizing} and Figure 6 of \citet{huitfeldt2021shall} for further details.
\begin{figure}
	\centering
	\includegraphics[width=0.3\textwidth]{"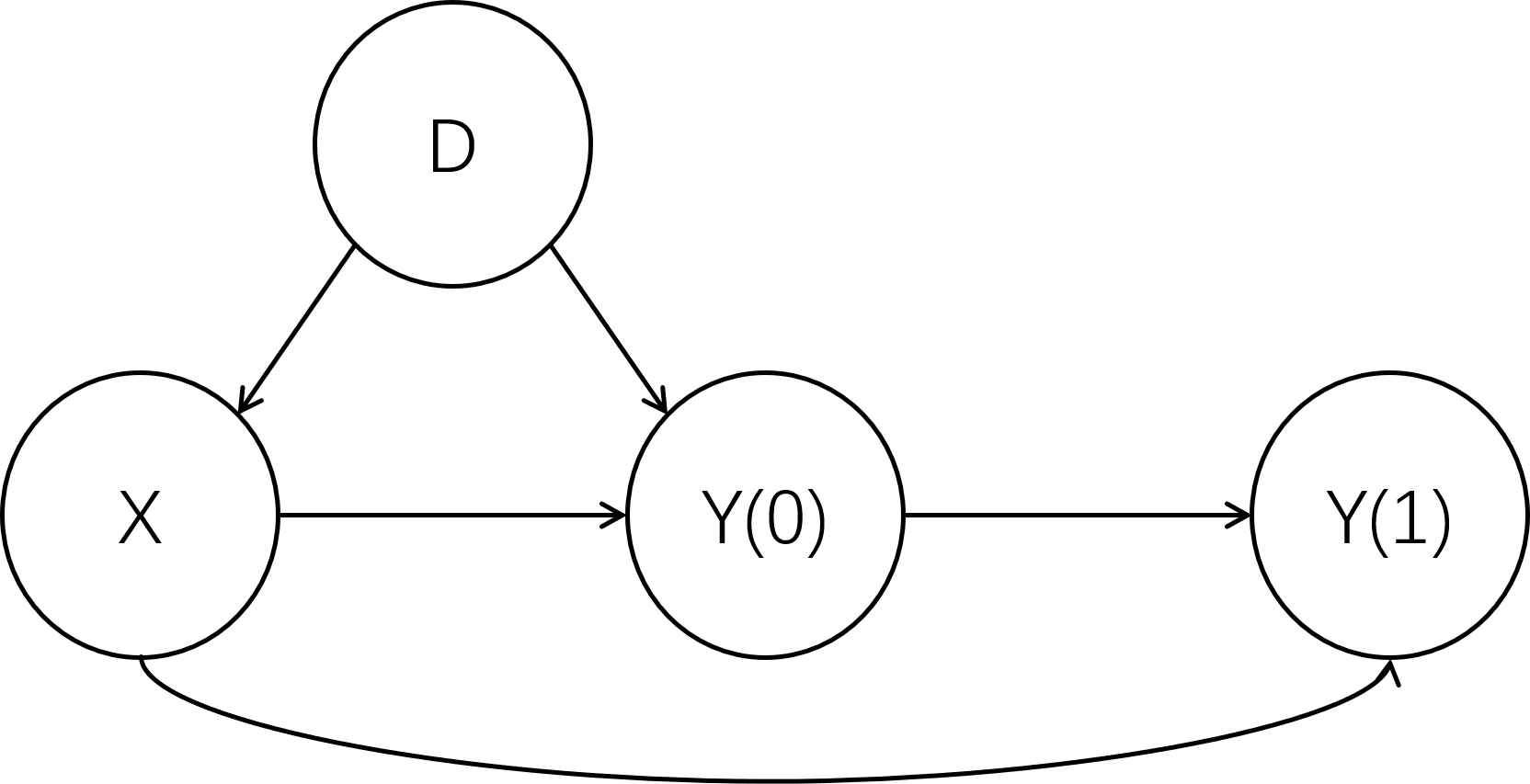"}
	\caption{A causal graph illustration for Assumption \ref{assumption: the invariance of counterfactual expectation transition mechanism}.}\label{fig: causal graph}
	\vspace{-10pt} 
\end{figure}
\subsection{The Conditional Mixture Distribution Assumption}
As mentioned in Section \ref{section: Key assumptions of previous works}, the monotonicity assumption for the continuous outcome variable is too weak to obtain the identification result. We propose an alternative identification assumption whose motivation comes from the synthetic control method that the distribution of the treated unit $P(Y_{T=0}|D=0)$ lies in
the convex hull of the distributions of control units $\{P(Y_{T=0}|D=i)\}_{i=1}^N$ \citep{wan2018panel,kato2023synthetic}:
	\begin{align*}
		P(Y_{T=0}\leq y|D=0)=\sum_{i=1}^{N}w_i P(Y_{T=0}\leq y|D=i)
	\end{align*}
Inspired by this assumption, we propose the following conditional mixture distribution assumption and utilize the covariate information. Similarly, we substitute the role of the treatment indicator $A$ for the time indicator $T$.
\begin{assumption}[Conditional Mixture Distribution]
	\label{asumption:mixture distribution}
	The control group distribution of the target population can be represented as a weighted mixture distribution constructed from the source populations. For every covariate $x\in \mathbb{R}^d$ and outcome $y\in \mathbb{R}$, there exists a set of weights $ \{w_i^*(x)\}_{i=1}^N $ such that  
	\begin{equation}\label{equation:assumption:mixture distribution}
		\Pr(Y(0) \leq y|X=x,D=0)=\sum_{i=1}^{N}w_i^*(x) \Pr(Y(0) \leq y|X=x,D=i).
	\end{equation}
\end{assumption}
\begin{proposition}\label{proposition: weaker than the distribution exchangeability assumption}
Assumption \ref{asumption:mixture distribution} is weaker than the distribution exchangeability assumption $Y(0)\indep D|X$. It can be implied by $ Y(0)\indep D|X $ with any weights $\{w^*_i(x)\}_{i=1}^N$ as long as $\sum_{i=1}^{N}w^*_i(x)=1$. If the covariate $X$ and the control potential outcome $Y(0)$ can serve as the minimal invariant set, then the assumptions of the stable distributions, the sufficiently similar donors and the target donors overlap of \citet{shi2022assumptions} together imply Assumption \ref{asumption:mixture distribution}.
\end{proposition}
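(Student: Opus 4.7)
The plan is to establish the two implications separately. For the first implication, the strategy is direct substitution: the distribution exchangeability assumption $Y(0)\indep D\mid X$ means that for every $i\in\{0,1,\dots,N\}$ and every $(x,y)$,
\begin{equation*}
\Pr(Y(0)\leq y\mid X=x,D=i)=\Pr(Y(0)\leq y\mid X=x).
\end{equation*}
Plugging this identity into the right-hand side of Equation \ref{equation:assumption:mixture distribution} and invoking $\sum_{i=1}^{N}w_i^*(x)=1$, the weighted sum collapses to the common conditional CDF, which in turn equals $\Pr(Y(0)\leq y\mid X=x,D=0)$ by the same exchangeability. Hence any weights summing to one make Assumption \ref{asumption:mixture distribution} hold.

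To justify the word \emph{weaker}, I would exhibit a simple counterexample. Take $N=2$ and fix $x$; let $F_1$ and $F_2$ be two distinct conditional CDFs of $Y(0)$ given $X=x$ in the two source populations, and define the target population so that the conditional CDF of $Y(0)$ given $X=x$ equals $\tfrac{1}{2}F_1+\tfrac{1}{2}F_2$. Then Assumption \ref{asumption:mixture distribution} holds with $w_1^*(x)=w_2^*(x)=\tfrac{1}{2}$, but $Y(0)\not\indep D\mid X$ because $F_1\neq F_2$. This shows the reverse implication fails.

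For the third claim, the plan is to translate Shi \emph{et al.}'s conditions into our potential-outcome notation with $(X,Y(0))$ playing the role of the minimal invariant set. The stable distributions assumption then says that the joint law of $(X,Y(0))$ is what varies across populations (while deeper conditional structure is preserved). The sufficiently similar donors assumption delivers a set of nonnegative population weights $\{\pi_i\}_{i=1}^N$ summing to one such that the target joint density of $(X,Y(0))$ equals the $\pi$-weighted mixture of the source joint densities, and the target donors overlap assumption guarantees the ratios below are well defined on the support of $X$ under $D=0$. Writing the joint densities as products of conditionals and marginals, dividing both sides of the mixture identity by the marginal density of $X$ under $D=0$, and identifying
\begin{equation*}
w_i^*(x)=\pi_i\,\frac{f_{X\mid D=i}(x)}{f_{X\mid D=0}(x)}
\end{equation*}
as the covariate-indexed weights, yields precisely Equation \ref{equation:assumption:mixture distribution}. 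A short verification shows these $w_i^*(x)$ are nonnegative and sum to one for $f_{X\mid D=0}$-almost every $x$, using overlap.

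The hard part is the third implication: one must match the panel-data vocabulary of Shi \emph{et al.}\ with our transportability setup and show that promoting population-level donor weights to covariate-dependent weight functions $w_i^*(x)$ goes through cleanly under overlap, with no measure-theoretic leaks on null sets of $X$. The first implication and the counterexample require only bookkeeping, but the translation step in the third implication is where the conceptual bridge between the synthetic control literature and our identification framework must be made explicit.
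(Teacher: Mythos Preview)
Your argument for the main implication (distribution exchangeability $\Rightarrow$ Assumption~\ref{asumption:mixture distribution} for any weights summing to one) is exactly what the paper does: use $Y(0)\indep D\mid X$ to equate all the conditional CDFs, then multiply by $\sum_i w_i^*(x)=1$ and distribute. On this core step there is no difference.

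Where you diverge is in scope. The paper's proof stops after that one implication; it gives no counterexample to establish strictness and offers no argument for the third sentence about \citet{shi2022assumptions}. Your counterexample (two distinct source CDFs with the target as their midpoint) is correct and cleanly shows the converse fails. Your derivation for the Shi et al.\ claim is also sound as a piece of mathematics: if the target joint density of $(X,Y(0))$ is a $\pi$-mixture of the source joint densities, then marginalizing over $y$ gives $f_{X\mid D=0}(x)=\sum_i\pi_i f_{X\mid D=i}(x)$, and dividing the joint identity by this marginal produces the conditional mixture with $w_i^*(x)=\pi_i\,f_{X\mid D=i}(x)/f_{X\mid D=0}(x)$, which are nonnegative and sum to one wherever overlap holds. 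So your proposal is more complete than the paper's own proof, which treats the Shi et al.\ implication as a remark rather than something to be argued. The only caveat is that the correctness of the third part rests on your reading of what ``stable distributions,'' ``sufficiently similar donors,'' and ``target donors overlap'' mean when $(X,Y(0))$ is the minimal invariant set; since the paper itself does not spell this out, you are filling a gap the authors left open rather than reproducing their argument.
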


When the outcome variable $Y$ is binary, Assumption \ref{asumption:mixture distribution} generally holds except in some degenerate cases. As a result, \citet{cinelli2021generalizing} implicitly adopted Assumption \ref{asumption:mixture distribution} in their Theorem 2 when they have multiple source populations. Similar assumptions have also been used in the fields of mixture proportion estimation \citep{yu2018efficient,hall1981non}, class ratio estimation \citep{iyer2014maximum,iyer2016privacy}, class prior estimation \citep{kawakubo2016computationally,du2014semi}, quantification \citep{castano2022matching,chow2022semiparametric} and multi-source domain adaptation \citep{zhang2015multi} although for different purposes. Assumption \ref{asumption:mixture distribution} is reasonable when all the control group distributions are mixture distributions and they share a common set of mixture components, which is the fundamental assumption underlying the field of topic modeling \citep{Lee2013UsingMS}. 
\begin{example}
	\label{example: mixture distribution}
	In order to assess the effectiveness of a new antihypertensive drug on the target population, researchers have conducted RCTs in $N$ different hospitals. Denote the outcome $y$ and the covariate $x$ as the blood pressure and the baseline health condition such as the Body Mass Index. Different hospitals have different age group distributions. We refer to $m\in\{1,2,3\}$ as children, the working-age population and the elderly population, respectively. And $\{a(m|x,i)\}_{m=1}^3$ denotes the proportions of different age groups in the $i$-th hospital ($i=0$ refers to the target population) with the baseline health condition $x$; $\{F(y|x,m)\}_{m=1}^3$  represents the control group distributions of different age groups with the baseline health condition $x$. The control group distributions of the source populations and the target population share a common set of mixture components $\{F(y|x,m)\}_{m=1}^3$. The only difference lies in their mixture weights $\{a(m|x,i)\}_{m=1}^3$:
	\begin{align*} 
		\Pr(Y(0) \leq y|X=x,D=i)&=\sum_{m=1}^{3}a(m|x,i)F(y|x,m), \mbox{ for }	i\in\{0, 1, \ldots, N\}.
	\end{align*}
	As long as the information from the source populations is sufficient enough, we can presume that there exists a set of weights $\{w^*_i(x)\}_{i=1}^N$ such that
	\begin{equation}\label{equantion:example 2}
		\begin{pmatrix}
			a(1|x,1) & a(1|x,2) & \cdots & a(1|x,N) \\
			a(2|x,1) & a(2|x,2) & \cdots & a(2|x,N) \\
			a(3|x,1) & a(3|x,2)& \cdots & a(3|x,N)
		\end{pmatrix}\begin{pmatrix}
			w^*_1(x) \\
			\vdots \\
			w^*_N(x)
		\end{pmatrix}=\begin{pmatrix}
			a(1|x,0) \\
			a(2|x,0)\\
			a(3|x,0)
		\end{pmatrix},
	\end{equation}
	then Assumption \ref{asumption:mixture distribution} holds. When we use the classical transportability method, the heterogeneity of the source populations increases the risk of violating the mean exchangeability assumption. However, when we consider Assumption \ref{asumption:mixture distribution}, the heterogeneity of the source populations leads to more diverse combinations of mixture weights such that the coefficient matrix in Eqn.~\ref{equantion:example 2} has a larger column rank. Consequently, Assumption \ref{asumption:mixture distribution} is more likely to be satisfied. Under Assumption \ref{asumption:mixture distribution}, we don't need to use the strong distribution exchangeability assumption that all the populations share the same control group distribution. We can combine the information from multiple heterogeneous source populations which can complement each other.
\end{example}
\subsection{The Identification Result}
The synthetic control method constructs the synthetic control unit for the treated unit by weighting the control units. Motivated by that, we can create a synthetic treatment group for the target population by weighting the treatment groups of the source populations. The synthetic control method estimates the weights by matching the weighted control unit with the treated unit in the pre-intervention period. Its effectiveness depends on the assumption that the optimal weights remain the same between the pre-intervention period and the post-intervention period. We can obtain a similar result that the optimal weights of our method stay invariant between the control group and the treatment group. 
\begin{theorem}
	\label{proposition: identification}
	Under Assumptions \ref{assumption: the invariance of counterfactual expectation transition mechanism} and \ref{asumption:mixture distribution}, for every covariate pattern $x\in \mathbb{R}$, we have
	\begin{equation}\label{equation: 1}
		{\mathbb E}(Y(1)|X=x,D=0)=\sum_{i=1}^{N}w_i^*(x){\mathbb E}(Y(1)|X=x,D=i) =\sum_{i=1}^{N}w_i^*(x)g^*_i(x),
	\end{equation}
	where $\{w_i^*(x)\}_{i=1}^N$ refers to the set of weights for the control group in Assumption \ref{asumption:mixture distribution} and $g^*_i(x)$ refers to the outcome regression function ${\mathbb E}(Y|X=x,A=1,D=i)$. Furthermore, we can marginalize over the covariate $X$ to obtain the identification formula for the treatment group expectation, that is, 
	\begin{equation}\label{equation: 2}
		{\mathbb E}[Y(1)|D=0]={\mathbb E}\left\{{\mathbb E}[Y(1)|X,D=0]|D=0\right\}={\mathbb E} \Big[\sum_{i=1}^{N}w_i^*(X)g^*_i(X)|D=0 \Big].
	\end{equation}	
\end{theorem}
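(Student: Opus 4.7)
The plan is to prove the conditional identity in Eqn.~\ref{equation: 1} first, and then derive the marginal identity in Eqn.~\ref{equation: 2} by simply integrating against the covariate distribution of the target population via the tower property; this second step is essentially immediate, so the crux is Eqn.~\ref{equation: 1}.

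For the conditional identity, I would start from $E[Y(1)\mid X=x, D=0]$ and condition on the control potential outcome $Y(0)$ before anything else, writing
\[
E[Y(1)\mid X=x, D=0] = \int E[Y(1)\mid X=x, Y(0)=y, D=0]\,dP\bigl(Y(0)=y\mid X=x, D=0\bigr).
\]
This is the step that lets me bring both identification assumptions into play simultaneously. I would then apply Assumption~\ref{asumption:mixture distribution} to rewrite the integrating measure as the weighted sum $\sum_{i=1}^N w_i^*(x)\,dP(Y(0)=y\mid X=x, D=i)$, and apply Assumption~\ref{assumption: the invariance of counterfactual expectation transition mechanism} to replace the integrand $E[Y(1)\mid X=x, Y(0)=y, D=0]$ by $E[Y(1)\mid X=x, Y(0)=y, D=i]$ for each $i$. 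Switching the finite sum and the integral (permissible under mild integrability) collapses each inner integral back to a conditional expectation via the tower property in source $i$:
\[
\int E[Y(1)\mid X=x, Y(0)=y, D=i]\,dP(Y(0)=y\mid X=x, D=i) = E[Y(1)\mid X=x, D=i].
\]
Finally, I would invoke SUTVA and the strong ignorability assumption within trial $i$ to identify this counterfactual mean with the observed outcome regression $E[Y\mid X=x, A=1, D=i] = g_i^*(x)$, yielding Eqn.~\ref{equation: 1}.

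For Eqn.~\ref{equation: 2}, I would apply the tower property of conditional expectation over $X$ given $D=0$ to the identity established in Eqn.~\ref{equation: 1}, which directly gives $E[Y(1)\mid D=0] = E[\sum_{i=1}^N w_i^*(X)\,g_i^*(X)\mid D=0]$.

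The main obstacle I anticipate is not algebraic but conceptual: I must ensure that Assumption~\ref{asumption:mixture distribution} is used at the level of the \emph{distribution} of $Y(0)$ (so that a single mixture weight $w_i^*(x)$ can be pulled out of the integral), while Assumption~\ref{assumption: the invariance of counterfactual expectation transition mechanism} is used \emph{pointwise in $y$} on the conditional expectation of $Y(1)$ given $Y(0)=y$. If one tries to apply the mean exchangeability style of argument (invoking invariance only on $E[Y(1)\mid X=x, D]$ without conditioning on $Y(0)$), the two assumptions cannot be combined. Conditioning on $Y(0)$ first is therefore the key move, and I would emphasize that it is precisely this ordering --- integrate out $Y(0)$ using the mixture, after applying CETM invariance at the pointwise level --- that makes the identification go through.
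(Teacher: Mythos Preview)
Your proposal is correct and follows essentially the same argument as the paper's proof: condition on $Y(0)$ via the tower property, apply Assumption~\ref{asumption:mixture distribution} to the integrating measure and Assumption~\ref{assumption: the invariance of counterfactual expectation transition mechanism} pointwise to the integrand, collapse each term back with the tower property in source $i$, and finish with strong ignorability plus SUTVA to pass to $g_i^*(x)$. The marginalization step for Eqn.~\ref{equation: 2} is handled identically.
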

\begin{proposition}\label{proposition: generalize the identification formula under the mean exchangeability assumption}
Even when Assumptions \ref{assumption: the invariance of counterfactual expectation transition mechanism} and \ref{asumption:mixture distribution} are violated, the identification results in Eqns. \ref{equation: 1} and \ref{equation: 2} still hold no matter how we choose the weights as long as the mean exchangeability assumption holds and $\sum_{i=1}^{N}w^*_i(x)=1$. From this perspective, Eqns. \ref{equation: 1} and \ref{equation: 2} generalize the identification formula under the mean exchangeability assumption \citep{dahabreh2020towards}.     
\end{proposition}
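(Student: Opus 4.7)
The plan is to show directly that under the mean exchangeability assumption combined with the normalization $\sum_{i=1}^N w_i^*(x)=1$, the proposed weighted sum collapses to the target's conditional expectation of $Y(1)$, without ever invoking Assumptions~1 or~2. The argument rests on two standard facts already available in the paper: the strong ignorability assumption, which identifies the outcome regression $g_i^*(x)=\mathbb{E}[Y\mid X=x,A=1,D=i]$ with the potential-outcome expectation $\mathbb{E}[Y(1)\mid X=x,D=i]$ in each source trial; and the mean exchangeability assumption, which equates this quantity across all $i\in\{0,1,\ldots,N\}$.

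First, I would fix an arbitrary covariate value $x$ and an arbitrary set of weight functions $\{w_i^*(x)\}_{i=1}^N$ satisfying $\sum_{i=1}^N w_i^*(x)=1$. By strong ignorability together with SUTVA, $g_i^*(x)=\mathbb{E}[Y(1)\mid X=x,D=i]$ for every $i\in\{1,\ldots,N\}$. By the mean exchangeability assumption applied to $a=1$, each of these equals the target conditional expectation $\mathbb{E}[Y(1)\mid X=x,D=0]$. Hence
\begin{equation*}
\sum_{i=1}^{N} w_i^*(x)\,g_i^*(x)
= \mathbb{E}[Y(1)\mid X=x,D=0]\sum_{i=1}^{N} w_i^*(x)
= \mathbb{E}[Y(1)\mid X=x,D=0],
\end{equation*}
which is exactly Eqn.~\ref{equation: 1}. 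The key observation is that when every $g_i^*(x)$ coincides with the same value, the particular choice of weights is immaterial provided they sum to one, so no knowledge of the mixture structure or of the CETM invariance is needed.

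For Eqn.~\ref{equation: 2}, I would simply take the conditional expectation of both sides of the pointwise identity over $X\mid D=0$ and invoke the tower property:
\begin{equation*}
\mathbb{E}[Y(1)\mid D=0]
= \mathbb{E}\bigl\{\mathbb{E}[Y(1)\mid X,D=0]\,\big|\,D=0\bigr\}
= \mathbb{E}\!\left[\sum_{i=1}^{N} w_i^*(X)\,g_i^*(X)\,\Big|\,D=0\right].
\end{equation*}
There is essentially no obstacle here; the only subtlety worth flagging is measurability of the weight functions so that the outer expectation is well defined, which is automatic whenever $w_i^*(\cdot)$ is taken in the function class used throughout the paper. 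The proposition then follows, and in the degenerate case $N=1$ with $w_1^*(x)\equiv 1$ it reproduces the classical g-formula-style identification of \citet{dahabreh2020towards} as a special instance.
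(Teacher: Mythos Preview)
Your proposal is correct and follows essentially the same approach as the paper's own proof: use mean exchangeability to equate all $\mathbb{E}[Y(1)\mid X=x,D=i]$ with $\mathbb{E}[Y(1)\mid X=x,D=0]$, then collapse the weighted sum using $\sum_i w_i^*(x)=1$, and finally apply the tower property for Eqn.~\ref{equation: 2}. The only minor additions you make---explicitly invoking strong ignorability/SUTVA to identify $g_i^*$ with $\mathbb{E}[Y(1)\mid X=x,D=i]$ and flagging measurability---are details the paper leaves implicit but do not change the argument.
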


The essential difference between our work and the works of \citet{huitfeldt2018choice} and \citet{cinelli2021generalizing} is that they both required estimating the COST parameters before estimating the target parameter $\mathbb{E}[Y(1)|D=0]$, otherwise the target parameter is unidentified. However, when the outcome variable $Y$ is continuous, the CETM $\mathbb{E}[Y(1)|Y(0),X]$ can not be identified without further assumptions because $Y(1)$ and $Y(0)$ can not be observed at the same time. Assumption \ref{asumption:mixture distribution} circumvents this challenge and transforms this seemingly impossible estimation problem for the target population into the problem of weight estimations. Assumptions \ref{assumption: the invariance of counterfactual expectation transition mechanism} and \ref{asumption:mixture distribution} enable us to estimate $\mathbb{E}[Y(1)|D=0]$ even when the CETM $\mathbb{E}[Y(1)|Y(0),X]$ is unidentified.

In order to further elaborate on Example \ref{example: mixture distribution}, we construct the following example in which the identification formulae of the synthetic treatment group (Eqns. \ref{equation: 1} and \ref{equation: 2}) hold while the mean exchangeability assumption is violated.
\begin{example}[continued with Example \ref{example: mixture distribution}]
	\label{example: violation of mean exchangeability}
	Due to the common biological mechanism underlying the drug action, patients from the source populations and the target population share the same counterfactual expectation transition mechanism: ${\mathbb E}[Y(1)|Y(0)=y,X=x]=c(x)y+d(x)$. Here we denote $Y(1)$ and $Y(0)$ as the potential outcomes under the new hypertensive drug and the traditional treatment, respectively. The control group expectation of the $i$-th source population follows the linear model ${\mathbb E}[Y(0)|X=x,D=i]=a_ix+b_i$, for $i\in\{1, \ldots, N\}$. Because different hospitals have different baseline medical conditions, they have different control group distributions such that $a_i\neq a_j$ when $i\neq j$. Because both the treatment potential outcome and the conditional treatment effect depend on the baseline medical condition, we presume that $c(x)\neq 0$ and $c(x)\neq 1$.  
	
	As we will show in Appendix \ref{subsection: proof of example 2}, the heterogeneity between the source populations leads to the violation of the mean exchangeability assumption and the conditional exchangeability in measure assumption. As a result, the classical transportability method is inapplicable. However, according to Theorem \ref{proposition: identification}, we can still identify the treatment group expectation of the target population by constructing a synthetic treatment group. Please refer to Section \ref{section: Simulated experiment} for simulated experimental results which further support Example \ref{example: violation of mean exchangeability}.

\end{example}
For a better comparison with previous works, we summarize the identification assumptions in Table \ref{table:summary of the identification assumptions}. 
\begin{table}[h]
	\centering
	\caption{A summary of the identification assumptions.}
	\large
	\resizebox{\linewidth}{!}{
		\begin{tabular}{ll}\toprule
			Identification Assumption   & Formulation {\footnotesize ($Y$: outcome; $A$: treatment; $X$: covariate; $D$: population index)}   \\ \midrule
			Mean Exchangeability~\citep{Rudolph2017RobustEO,Dahabreh2019GeneralizingCI,Dahabreh2020ExtendingIF,li2021improving}   & ${\mathbb E}[Y(a)|X=x,D=i]={\mathbb E}[Y(a)|X=x,D=0] \mbox{ for }a\in\{0,1\}$ \\
			Conditional Exchangeability in Measure~\citep{allcott2015site,dahabreh2020towards} & $	{\mathbb E}[Y(1)-Y(0)|X=x,D=i]={\mathbb E}[Y(1)-Y(0)|X=x,D=0]$\\
			Distribution Exchangeability~\cite{hotz2005predicting,lesko2017generalizing,lu2019causal,Li2020TargetPS,Egami2020ElementsOE}& $Y(a)\indep D|X \mbox{ for }a\in\{0,1\}$\\ \midrule
			Homogeneity of the COST parameters (Invariance of probabilities of causation)~\citep{huitfeldt2018choice,cinelli2021generalizing} & $Y(1) \indep D|Y(0)$, where $Y(1)$ and $Y(0)$ are binary variables\\
			Monotonicity ~\citep{huitfeldt2018choice,cinelli2021generalizing}& $Y(1)\geq Y(0)$\\ \midrule
			Invariance of counterfactual expectation transition mechanisms (ours)& ${\mathbb E}[Y(1)|Y(0)=y,X=x,D=i]={\mathbb E}[Y(1)|Y(0)=y,X=x,D=0]$\\
			Conditional Mixture Distribution (ours)& $\Pr(Y(0)\leq y|X=x,D=0)=\sum_{i=1}^{N}w_i^*(x)\Pr(Y(0)\leq y|X=x,D=i)$
			\\ \bottomrule
	\end{tabular}}
	\vspace{-13pt}
	\label{table:summary of the identification assumptions}
\end{table}
\section{Weight Estimation by Minimizing the CMMD}\label{section: Weight estimation by minimizing the CMMD}


Under Assumption \ref{assumption: the invariance of counterfactual expectation transition mechanism}, if we can estimate the CETM ${\mathbb E}[Y(1)|Y(0)=y,X=x,D=0]$, then we can directly estimate the treatment group expectation by reweighting since 
\begin{align*}
	{\mathbb E}[Y(1)|X=x,D=0]={\mathbb E}\left\{{\mathbb E}[Y(1)|Y(0),X,D=0]|X=x,D=0\right\}.
\end{align*}
However,  we can't observe both $Y(1)$ and $Y(0)$ at the same time which is also known as the fundamental problem of causal inference \citep{Holland1985StatisticsAC}. As a result, we can not directly estimate ${\mathbb E}[Y(1)|Y(0)=y,X=x]$ without further assumptions. According to Theorem \ref{proposition: identification}, we can circumvent this problem by estimating the weights $\{w_i^*(x))\}_{i=1}^N$ to create a synthetic treatment group. Instead of assigning equal weights to each source population, the weights are chosen in a more principled way to reflect the relevance of the source populations with the target population.

The synthetic control method estimates the weights by matching the first order moment of the weighted control unit with the treated unit in the pre-intervention periods \citep{wong2015three}. Without using the higher order moments, this method may be problematic if the number of populations $N$ is large and there is only one pre-treatment period. In order to leverage the higher order information of the distributions, an intuitive approach will be to choose a vector-valued function $ g(y) $ of dimension $ d $ ($ d\geq N $) and use the generalized method
of moments \citep{hall2004generalized} to solve the following minimization problem:\footnote{Here we denote $\sum_{i=1}^{N}w_i(x) P_{Y(0)|X=x,D=i}$ as the mixture distribution with the cumulative distribution function $\sum_{i=1}^{N}w_i(x) \Pr(Y(0) \leq y|X=x,D=i)$.}
\begin{align*} 
		\underset{\{w_i(x))\}_{i=1}^N}{\arg\min}~ \| {\mathbb E}[g(Y)]-{\mathbb E}[g(Z)] \| ,
		\mbox{ where } Y\sim\sum_{i=1}^{N}w_i(x)P_{Y(0)|X=x,D=i} \mbox{ and }	Z\sim P_{Y(0)|X=x,D=0}.
\end{align*}
However, it is difficult to choose the moment function $ g(y) $ in practice without sufficient background knowledge. We propose a method to choose $g(y)$ automatically. Before presenting it, we need to introduce the concepts of the conditional mean embedding and the conditional maximum mean discrepancy.

\subsection{Conditional Mean Embedding and Conditional Maximum Mean Discrepancy}

We refer to $X\colon\Omega \to \mathcal{X}$ and $Y\colon\Omega \to \mathcal{Y}$ and $Z\colon\Omega \to \mathcal{Y}$ as the random variables defined on the probability space $(\Omega,\mathscr{F},\mathbb{P})$. Denote $\mathcal{H}$ and $\mathcal{G}$ as the reproducing kernel Hilbert space (RKHS) induced by the symmetric positive definite kernels $ k\colon\mathcal{X}\times\mathcal{X}\to \mathbb{R} $ and $ \ell\colon \mathcal{Y}\times\mathcal{Y}\to \mathbb{R} $, respectively. For every $x\in \mathcal{X}$ and $y\in \mathcal{Y}$, the canonical feature mappings are defined as $k(x,\cdot)\in\mathcal{H}$ and $\ell(y,\cdot)\in\mathcal{G}$. Through the conditional mean embedding, we can represent a probability distribution as an element in the RKHS. The conditional mean embedding (CME) of the conditional distribution of Y given X in the RKHS $ \mathcal{G} $ with respect to the kernel $ \ell $ is defined as the following Bochner integral \citep{song2009hilbert,muandet2017kernel}: 
\begin{equation*}
	\mu_{Y|X=x}={\mathbb E}[\ell(Y,\cdot)|X=x].    
\end{equation*}
With the CME, we can define a probability metric between conditional distributions $P_{Y|X=x}$ and $P_{Z|X=x}$, which is called the conditional maximum mean discrepancy (CMMD) \citep{Ren2016ConditionalGM,park2021conditional}:
\[
\Vert \mu_{Y|X=x}-\mu_{Z|X=x}\Vert^2_\mathcal{G},	\mbox{ where } \Vert \cdot \Vert_{\mathcal{G}} \mbox{ denotes the norm of } \mathcal{G}.\]
The CMMD characterizes the difference between two conditional distributions. It equals zero if and only if two conditional distributions are identical to each other almost surely \citep{park2021conditional}. In the spirit of the minimum distance estimator \citep{cherief2022finite}, we estimate the weights by minimizing the CMMD between the weighted control group distribution of the source populations and the control group distribution of the target population:
\begin{align}\label{equation:Integral probability metric}
	\{w^*_i(x)\}_{i=1}^N=\underset{\{w_i(x))\}_{i=1}^N}{\arg\min}~d(x,w(x))=\underset{\{w_i(x))\}_{i=1}^N}{\arg\min}~\sup_{\Vert f\Vert_{\mathcal{G}}\leq1}\left\vert {\mathbb E}[f(Y)]-{\mathbb E}[f(Z)]\right\vert,
\end{align}
where $Y\sim \sum_{i=1}^{N}w_i(x)P_{Y(0)|X=x,D=i}$ and $Z\sim P_{Y(0)|X=x,D=0}$. Here we denote the CMMD as $d(x,w(x))=\Vert \mu_{\sum_{i=1}^{N}w_i(x)P_{Y(0)|X=x,D=i}}-\mu_{P_{Y(0)|X=x,D=0}}\Vert_{\mathcal{G}}^2$ and $w(x)=[w_1(x), \ldots, w_N(x)]^T$.

Eqn.~\ref{equation:Integral probability metric} follows from \citep{park2021conditional}. The method of minimizing the CMMD can be regarded as a way to select the moment function $f(y)$ automatically in the unit ball of the RKHS $\mathcal{G}$ according to Eqn.~\ref{equation:Integral probability metric}.
In order to estimate the CMMD $d(x,w(x))$, we need to first estimate the CME.

\subsection{Estimation of Conditional Mean Embedding}

According to the SUTVA assumption, we refer to $\{Y_{i,j}(0),X_{i,j}\}_{j=1}^{n_i}$ and $\{Y_{0,j}(0),X_{0,j}\}_{j=1}^{n_0}$ as the control group data from the $i$-th source population and the target population. 

We can estimate the CME of the control group distribution of the $i$-th source population ($i=0$ refers to the target population) by using the sample from the control group as \citep{song2009hilbert,muandet2017kernel}$\colon$ 
\begin{equation}\label{equation: notations}
	\hat{\mu}_{Y(0)|X=x,D=i}= \ell_i^TM_ik_i(x),
\end{equation}
where $k_i(x)=(k(X_{i,1},x),\cdots, k(X_{i,n_i},x))^T$, $M_i=(K_i+\lambda_i I_{n_i})^{-1}$, $K_i$ denotes the $n_i\times n_i$ kernel matrix with $\left[K_i\right]_{u,v}=k(X_{i,u},X_{i,v})$,  $\lambda_i$ denotes the regularization hyperparameter and $\ell_i=(\ell(Y_{i,1}(0), \cdot), \ldots, \ell(Y_{i, n_i}(0), \cdot))^T$.

One advantage of the CME estimator is that we don't need to specify a parametric model for the conditional distribution. It circumvents the problem of model misspecification \citep{muandet2017kernel}. By the linearity of integral operation, we have
\begin{align*}
	\mu_{\sum_{i=1}^{N}w_i(x)P_{Y(0)|X=x,D=i}}=\sum_{i=1}^{N}w_i(x)\mu_{Y(0)|X=x,D=i}.
\end{align*}
It follows that the CME of the mixture distribution can be estimated as
\begin{align*}
	\hat{\mu}_{\sum_{i=1}^{N}w_i(x) P_{Y(0)|X=x,D=i}}=\sum_{i=1}^{N}w_i(x)\hat{\mu}_{Y(0)|X=x,D=i}=\begin{bmatrix}
		\ell_1^T M_1 k_1(x),
		\ldots,
		\ell_N^TM_Nk_N(x)
	\end{bmatrix}w(x).
\end{align*}

\subsection{Pointwise Weight Estimation}

We can substitute the CME estimators into Eqn.~\ref{equation:Integral probability metric} to obtain the CMMD estimator. 
\begin{theorem}
	\label{proposition: estimation} 
	The CMMD estimator for $d(x,w(x))$ is
	\begin{equation}\label{equation: CMMD calculation}
		\hat{d}(x,w(x)) = w(x)^T\hat{A}(x)w(x)-2w(x)^T\hat{b}(x)+k_0^T(x)M_0\ell_0\ell_0^TM_0k_0(x).    
	\end{equation}
	It is a convex quadratic function with respect to $w(x)$. Futhermore, $\hat{d}(x,w(x))$ is minimized when $w(x)=\hat{A}(x)^{-1}\hat{b}(x)$.
\end{theorem}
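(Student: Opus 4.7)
The plan is to expand the CMMD estimator explicitly using the plug-in CME estimators from Eqn. \ref{equation: notations} and then recognize the resulting expression as a convex quadratic form in $w(x)$ whose minimizer follows from the normal equations. Starting from
\begin{align*}
\hat{d}(x, w(x)) = \Bigl\| \sum_{i=1}^N w_i(x)\, \ell_i^T M_i k_i(x) - \ell_0^T M_0 k_0(x) \Bigr\|_{\mathcal{G}}^2,
\end{align*}
I would expand the squared RKHS norm into three groups of terms: the quadratic self-term in $w(x)$, the cross-term linear in $w(x)$, and the constant term $\|\ell_0^T M_0 k_0(x)\|_{\mathcal{G}}^2 = k_0(x)^T M_0 \ell_0 \ell_0^T M_0 k_0(x)$ that already appears in the stated formula.

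For the first and second groups, I would use the reproducing property $\langle \ell(y,\cdot), \ell(y',\cdot)\rangle_{\mathcal{G}} = \ell(y,y')$ to reduce each inner product to a matrix product. Introducing the cross-kernel matrices $L_{ij}$ with $[L_{ij}]_{u,v} = \ell(Y_{i,u}(0), Y_{j,v}(0))$, one obtains $\langle \ell_i^T M_i k_i(x),\, \ell_j^T M_j k_j(x) \rangle_{\mathcal{G}} = k_i(x)^T M_i L_{ij} M_j k_j(x)$. This identifies the matrix $\hat{A}(x) \in \mathbb{R}^{N \times N}$ with $[\hat{A}(x)]_{ij} = k_i(x)^T M_i L_{ij} M_j k_j(x)$ and the vector $\hat{b}(x) \in \mathbb{R}^N$ with $[\hat{b}(x)]_i = k_i(x)^T M_i L_{i0} M_0 k_0(x)$, yielding the stated representation.

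Convexity then follows from positive semidefiniteness of $\hat{A}(x)$: for any $v \in \mathbb{R}^N$,
\begin{align*}
v^T \hat{A}(x) v = \Bigl\| \sum_{i=1}^N v_i\, \hat{\mu}_{Y(0)|X=x,D=i} \Bigr\|_{\mathcal{G}}^2 \geq 0,
\end{align*}
so $\hat{d}(x,\cdot)$ is a convex quadratic. Its minimizer is characterized by the first-order condition $\nabla_{w(x)} \hat{d}(x, w(x)) = 2\hat{A}(x) w(x) - 2\hat{b}(x) = 0$, which gives $w(x) = \hat{A}(x)^{-1}\hat{b}(x)$ whenever $\hat{A}(x)$ is invertible. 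The only genuine subtlety is invertibility of $\hat{A}(x)$: it can fail if the estimated CMEs $\{\hat{\mu}_{Y(0)|X=x,D=i}\}_{i=1}^N$ are linearly dependent in $\mathcal{G}$, but for a characteristic kernel $\ell$ and generic control samples this holds almost surely, so I would note this as a mild regularity condition rather than pursuing it in detail. The main bookkeeping burden is simply tracking the cross-terms carefully when collecting coefficients into $\hat{A}(x)$ and $\hat{b}(x)$.
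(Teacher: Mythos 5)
Your proposal is correct and follows essentially the same route as the paper's own proof: expand the squared RKHS norm of the plug-in CME difference, use the reproducing property to identify $\hat{A}(x)$ and $\hat{b}(x)$ as Gram-type quantities, deduce convexity from positive semidefiniteness, and solve the first-order condition for the minimizer. Your explicit remark on the invertibility of $\hat{A}(x)$ is a reasonable addition that the paper leaves implicit, but it does not change the argument.
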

We follow the notation from Eqn.~\ref{equation: notations}. Here $\hat{A}(x)$ denotes an $N\times N$ matrix with \\$[\hat{A}(x)]_{i,j}=k_i^T(x)M_i \ell_i \ell_j^T M_j k_j(x)$. It is an estimator for matrix $A(x)$ with $[A(x)]_{i,j}=\langle\mu_{Y(0)|X=x,D=i},\mu_{Y(0)|X=x,D=j}\rangle_\mathcal{G}.$ By the reproducing kernel property, $\ell_i \ell_j^T$ is a $n_i\times n_j$ matrix with $[\ell_i \ell_j^T]_{u,v}=\left\langle \ell(Y_{i,u}(0), \cdot), \ell(Y_{j,v}(0), \cdot)\right\rangle_\mathcal{G}=\ell(Y_{i,u}(0),Y_{j,v}(0))$. And $\hat{b}(x)$ is an $N$-dimensional vector with $\left[\hat{b}(x)\right]_i=k_i^T(x)M_i \ell_i \ell_0^T M_0 k_0(x)$. It is an estimator for vector $b(x)$ with $\left[b(x)\right]_i=\langle\mu_{Y(0)|X=x,D=i},\mu_{Y(0)|X=x,D=0}\rangle_\mathcal{G}$.

We can estimate the CMMD without explicitly modeling the conditional density function which is a difficult task especially in high dimension \citep{muandet2017kernel}. When we place the constraint on the weights such that $\sum_{i=1}^{N}w_i(x)=1$ and $\underset{i}{\min} \:w_i(x)\geq 0$, the weight estimation is a constrained quadratic programming problem. The pointwise estimation of $\{w_i(x)\}_{i=1}^N$ can be carried out for different $x$ in parallel. The pointwise weight estimation procedure is similar to the mixture proportion estimation \citep{hall1981non,iyer2014maximum,iyer2016privacy,yu2018efficient}. With access to samples from the target distribution and each component distribution, they estimate the finite dimensional weight vector by minimizing the maximum mean discrepancy between the weighted component distribution and the target distribution. In contrast, we additionally include the covariate information. As a result, we don't have access to samples directly from the component distributions of the control groups when the covariate variable is continuous. Consequently, we face a more difficult challenge of estimating several infinite dimensional weight functions.

\section{Sieve Semiparametric Two-step Estimation}\label{section: Sieve semiparametric two-step estimation}

The pointwise weight estimation method enjoys the advantage of being intuitive and general. It does not rely on any parametric modeling assumptions. However, in some situations, the pointwise weight estimation method may result in wiggly estimates (see Figure \ref{fig:weight estimation}). It may run into the risk of overfitting the data \citep{anderson2002model}. In many cases, it is usually plausible to presume that individuals with similar covariates $x$ will tend to have similar weights $w_i(x)$. In contrast, a complex and wiggly weight estimator may contradict with this prior knowledge and lose interpretability. In order to overcome this problem, we leverage the smoothness property by approximating the weight function $w_i(\cdot)$ via a linear sieve space $\mathcal{W}_i^{s(n_T)} = \{w_i(\cdot)=P_i(\cdot)^T\beta_{w_i}|\beta_{w_i}\in {\sR}^{s(n_T)}\}$, where $P_i(x)=\left(p_{i,1}(x), \ldots, p_{i,s(n_T)}(x)\right)^T$ is a set of known basis function such as the B-spline basis and $s(n_T)\to \infty$ slowly as $n_T\to \infty$ \citep{chen2007large}. Here we denote $n_T=n_0+\sum_{i=1}^{N}(m_i+n_i)$ as the total sample size. We can represent weights as $w(x)=V(x)^T\beta_w$, where $V(x)=\mathrm{diag}(P_1(x),\cdots,P_N(x))$ and $\beta_w=\left[\beta_{w_1}^T, \ldots, \beta_{w_N}^T\right]^T$. We can estimate the sieve coefficients $\beta_w$ by minimizing the average CMMD of the target population. Substituting $w(x)=V(x)^T\beta_w$ into Eqn.~\ref{equation: CMMD calculation}, the estimator for ${\mathbb E}\left[d(X,w(X))|D=0\right]$ can be represented as
\begin{align}\label{equation: minimizing average cmmd}
	\frac{1}{n_0}\sum_{i=1}^{n_0}\hat{d}(x_{0,i},w(x_{0,i}))=\:&\frac{1}{n_0}\sum_{i=1}^{n_0}\beta_w^TV(x_{0,i})\hat{A}(x_{0,i})V(x_{0,i})^T \beta_w-\frac{2}{n_0}\sum_{i=1}^{n_0}\left(V(x_{0,i})\hat{b}(x_{0,i})\right)^T\beta_w\notag\\
	&+\frac{1}{n_0}\sum_{i=1}^{n_0}k_0^T(x_{0,i})M_0 \ell_0 \ell_0^TM_0k_0(x_{0,i}).
\end{align}
Similar to Theorem \ref{proposition: estimation}, the sieve coefficient $\beta_w$ can be estimated by the quadratic programming algorithm. Given the sieve coefficient estimator $\hat{\beta}_{w_i}\in {\sR}^{s(n_T)}$, the weight estimator is $\hat{w}_i(x) = (\hat{\beta}_{w_i})^TP_i(x).$	

Analogously, we can approximate the outcome regression function $g^*_i(\cdot)$ via a linear sieve space $\mathcal{Q}_i^{t(n_T)} = \{g_i(\cdot)=Q_i(\cdot)^T\beta_{g_i}|\beta_{g_i}\in {\sR}^{t(n_T)}\}$, where $Q_i(x)=\left(q_{i,1}(x), \ldots, q_{i, t(n_T)}(x)\right)^T$ is a set of known basis function and $t(n_T)\to \infty$ as $n_T\to \infty$ \citep{chen2007large}. We denote by $\{y_{i,j}',x_{i,j}'\}_{j=1}^{m_i}$ the treatment group data from the $i$-th source population. The sieve coefficient $\hat{\beta}_{g_i}$ can be estimated as 
\begin{equation}\label{equation: sieve outcome regression}
	\hat{\beta}_{g_i}=\underset{\beta_{g_i}\in {\sR}^{t(n_T)}}{\arg \min}~\frac{1}{m_i}\sum_{j=1}^{m_i}(y'_{i,j}-\beta_{g_i}^TQ_i(x'_{i,j}))^2, \mbox{ for } i=1,\ldots,N.   
\end{equation}
According to Theorem \ref{proposition: identification}, we can marginalize over the covariate and construct the synthetic treatment group estimator for $\theta_0={\mathbb E}[Y(1)|D=0]$ as
\begin{equation}\label{equation: treatment group expectation estimator}
	\hat{\theta}=\frac{1}{n_0}\sum_{j=1}^{n_0}\sum_{i=1}^{N}\hat{w}_i(x_{0,j})\hat{g}_i(x_{0,j}),
\end{equation}
where $\hat{w}_i(x) = \hat{\beta}_{w_i}^TP_i(x)$ denotes the sieve extremum estimator for the weight function $w^*_i(x)$, $\hat{g}_i(x)=\hat{\beta}_{g_i}^TQ_i(x)$ denotes the sieve extremum estimator for the outcome regression function $g^*_i(x)$ and $\{x_{0,j}\}_{j=1}^{n_0}$ denotes the covariate data from the target population. Note that even when Assumptions \ref{assumption: the invariance of counterfactual expectation transition mechanism} and \ref{asumption:mixture distribution} are violated, Eqn.~\ref{equation: treatment group expectation estimator} is still a consistent estimator as long as the mean exchangeability assumption holds according to Proposition \ref{proposition: generalize the identification formula under the mean exchangeability assumption}. 

To the best of our knowledge, it is extremely difficult to derive the asymptotic distribution of the weighted estimator based on the pointwise weight estimator, which is similar to the problem of kernel conditional discrepancy \citep{park2021conditional}. This motivates us to apply the sieve semiparametric two-step estimation method \citep{chen2015sieve}.
\subsection{Two-step Estimation} 

Note that the synthetic treatment group estimator $\hat{\theta}$ can be formulated as a semiparametric two-step GMM estimation method where the first step consists of nonparametric estimation of weights and outcome regression functions \citep{chen2015sieve}. Let $Z=(D, A, X, Y).$ The true weights $\{w^*_i(\cdot)\}_{i=1}^N$ and outcome regression functions $\{g^*_i(\cdot)\}_{i=1}^N$ form the solution $h_0$ to the following infinite dimensional optimization problem:
\begin{align*}
	\underset{h}{\arg \sup}~Q(h)=\underset{h}{\arg \sup}~{\mathbb E}[\varphi(Z,h)]=\underset{h}{\arg \sup}~ \sE\left[\varphi_w(Z,w(X))+\sum_{i=1}^{N}\varphi_{g_i}(Z,g_i)\right],
\end{align*}
${\mbox{where }Q(h)=-\sE\left[d(X,w(X))|D=0\right]-\sum_{i=1}^{N}{\mathbb E}[(Y-g_i(X))^2|A=1,D=i]}$\\
and ${h=[w_1,\cdots,w_N,g_1,\cdots,g_N]}$. The moment function can be decomposed into two parts as $\varphi(Z,h)=\varphi_w(Z, w(X))+\sum_{i=1}^{N}\varphi_{g_i}(Z,g_i)$. Here we let $\varphi_w(Z,w(X))=-\frac{I_{D=0}}{\Pr(D=0)}d(X,w(X))$ and $\varphi_{g_i}(Z,g_i)=-\frac{I_{A=1,D=i}}{\Pr(A=1,D=i)}(Y-g_i(X))^2$.

Combining Eqns. \ref{equation: minimizing average cmmd} and \ref{equation: sieve outcome regression}, we employ the first step sieve extremum estimation to estimate the weights and outcome regression functions by minimizing the sample analog of the criterion function $Q(h)$:
\begin{align*}
	\hat{h}=\underset{h\in \mathcal{S}_{n_T}}{\arg \sup}~\hat{Q}(h)=\underset{h\in \mathcal{S}_{n_T}}{\arg \sup}~-\frac{1}{n_0}\sum_{j=1}^{n_0}\hat{d}(x_{0,j},w(x_{0,j}))-\sum_{i=1}^{N}\frac{1}{m_i}\sum_{j=1}^{m_i}(y'_{i,j}-g_i(x'_{i,j}))^2,
\end{align*}
where $\mathcal{S}_{n_T}=\{h\colon w_i(\cdot)\in \mathcal{W}_i^{s(n_T)} \mbox{ and } g_i(\cdot)\in \mathcal{Q}_i^{t(n_T)} \mbox{ for }i=1, \ldots, N\}$ is the linear sieve space.

The moment function of the second step GMM estimation is defined as
\begin{equation*}
	g(Z,\theta,h)=\theta-\frac{I_{D=0}}{\Pr(D=0)}\sum_{i=1}^{N}w_i(X)g_i(X),
\end{equation*}
where the parameter of interest is $\theta_0={\mathbb E}[Y(1)|D=0]$. Given the first step estimator $\hat{h}$, the second step GMM estimator of $\theta_0$ is equivalent to the synthetic treatment group estimator in Eqn.~\ref{equation: treatment group expectation estimator}.

\subsection{Statistical Inference}

We can establish the asymptotic normality of the synthetic treatment group estimator by characterizing the influence of the first step estimation \citep{chen2015sieve}. 

\begin{theorem}
	\label{theorem: asymptotic normality}
	Assume that $\{Z_i\}_{i=1}^{n_T}$ are independent and identically distributed. Under Assumptions \ref{assumption: the invariance of counterfactual expectation transition mechanism}, \ref{asumption:mixture distribution} and Assumptions A1 and A2 in Appendix \ref{subsection: proof of theorem 5}, we have
	\begin{align*}
		\sqrt{n_T}(\hat{\theta}-\theta_0)\overset{d}{\to} N(0,V_{\theta}),
	\end{align*}
	where $V_{\theta}=(\Gamma_1'W\Gamma_1)^{-1}(\Gamma_1'WV_1W\Gamma_1)(\Gamma_1'W\Gamma_1)^{-1}=V_1 = \Var\left(g(Z_i,\theta_0,h_0)+\Delta(Z_i,h_0)[v^*]\right)$ due to the fact that $\Gamma_1=\frac{\partial {\mathbb E}[g(Z,\theta,h)]}{\partial \theta}=1$. The adjustment term is defined as $\Delta(Z,h)[v]=\frac{\partial \varphi(Z,h+\tau v)}{\partial \tau}|_{\tau=0}$, which reflects the influence of the first step estimation, and $v^*$ denotes the Riesz representer of the Gateaux derivative $\Gamma_2(\theta_0,h_0)[v]=\frac{\partial {\mathbb E}[g(Z,\theta_0,h_0+\tau v)]}{\partial \tau}|_{\tau=0}$.
\end{theorem}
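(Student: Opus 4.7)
The plan is to specialize the sieve semiparametric two-step GMM framework of \citet{chen2015sieve} to our setup, where the first step produces $\hat h=[\hat w_1,\dots,\hat w_N,\hat g_1,\dots,\hat g_N]$ via sieve extremum estimation and the second step defines $\hat\theta$ as the solution to an exactly identified moment condition in $\theta$. Since $g(Z,\theta,h)$ is linear in $\theta$, one gets a closed-form expansion $\sqrt{n_T}(\hat\theta-\theta_0)=-\Gamma_1^{-1}\sqrt{n_T}\cdot \tfrac{1}{n_T}\sum_i g(Z_i,\theta_0,\hat h)+o_p(1)$ with $\Gamma_1=1$, and the task reduces to controlling the influence of plugging in $\hat h$ rather than $h_0$.

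First, I would verify the regularity conditions needed for Chen-Liao: (i) consistency of $\hat h$ in an appropriate pseudometric, which follows from uniform convergence of the criterion $\hat Q$ to $Q$ over the sieve $\mathcal{S}_{n_T}$ together with the identifiability of $h_0$ granted by Assumptions \ref{assumption: the invariance of counterfactual expectation transition mechanism} and \ref{asumption:mixture distribution}; (ii) a fast-enough convergence rate for $\hat h$ (strictly faster than $n_T^{-1/4}$) in a strong norm, obtained from the smoothness of $w_i^*$ and $g_i^*$ combined with the approximation properties of the B-spline sieves $\mathcal{W}_i^{s(n_T)}$ and $\mathcal{Q}_i^{t(n_T)}$ under a suitable choice of $s(n_T),t(n_T)$; (iii) the smoothness of $g(Z,\theta,h)$ in $h$ so that a second-order Taylor expansion around $h_0$ is legitimate. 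Under these, I would write
\[
\tfrac{1}{n_T}\sum_{i}g(Z_i,\theta_0,\hat h)=\tfrac{1}{n_T}\sum_{i}g(Z_i,\theta_0,h_0)+\Gamma_2(\theta_0,h_0)[\hat h-h_0]+R_{n_T},
\]
where the remainder $R_{n_T}=o_p(n_T^{-1/2})$ by the rate obtained in (ii) and because $g$ is bilinear in $(w,g)$.

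The crux is handling $\Gamma_2(\theta_0,h_0)[\hat h-h_0]$. The Gateaux derivative evaluates to
\[
\Gamma_2(\theta_0,h_0)[v]=-\sum_{i=1}^{N}\sE\bigl[v_{w_i}(X)g_i^*(X)+w_i^*(X)v_{g_i}(X)\,\big|\,D=0\bigr],
\]
which is a bounded linear functional on the tangent space endowed with the Fisher-type inner product induced by the population criterion $Q$ (determined by the Hessians of $E[d(X,w(X))|D=0]$ and of the mean-squared errors for $g_i$). By the Riesz representation theorem there exists $v^*=[v_{w_1}^*,\dots,v_{w_N}^*,v_{g_1}^*,\dots,v_{g_N}^*]$ in the closed linear span of the tangent directions such that $\Gamma_2(\theta_0,h_0)[v]=\langle v^*,v\rangle$. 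The first-order optimality of $\hat h$ in the sieve, together with the fact that $v^*$ can be approximated by sieve directions at rate better than $n_T^{-1/4}$, then yields the key identity
\[
\Gamma_2(\theta_0,h_0)[\hat h-h_0]=-\tfrac{1}{n_T}\sum_{i=1}^{n_T}\Delta(Z_i,h_0)[v^*]+o_p(n_T^{-1/2}),
\]
which is the content of Theorem 3.2 of \citet{chen2015sieve}. Combining this with the expansion above, the CLT for i.i.d.\ sums gives $\sqrt{n_T}(\hat\theta-\theta_0)\overset{d}{\to}N(0,V_\theta)$ with $V_\theta=\Var(g(Z,\theta_0,h_0)+\Delta(Z,h_0)[v^*])$.

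The main obstacle is the Riesz representer step: one must (a) identify the correct Hilbert structure on the tangent space in which $\Gamma_2$ is continuous, which is delicate because the weight component of the criterion is defined through the RKHS-based CMMD rather than a likelihood, and (b) verify that $v^*$ admits a good sieve approximation so that the orthogonality of the first-step score against sieve directions translates into orthogonality against $v^*$ up to $o_p(n_T^{-1/2})$. Everything else (consistency, rate, second-order remainder control, CLT) is comparatively standard once the approximation properties of the B-spline sieves and the smoothness of $w_i^*,g_i^*$ assumed in A1--A2 are in place.
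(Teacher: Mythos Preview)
Your proposal is correct and follows essentially the same route as the paper: both cast the estimator in the two-step sieve framework of \citet{chen2015sieve}, verify that the first step is a sieve M-estimation and the second step is an exactly identified GMM with $\Gamma_1=1$, and then invoke the Chen--Liao asymptotic normality theorem. The paper's own proof is in fact terser than yours---it packages the rate, Riesz-representer, and remainder conditions directly into Assumptions~A1 and~A2 (which are Chen--Liao's Assumptions~A.1 and~A.2 verbatim) rather than sketching how they would be checked, so your more explicit unpacking of the Taylor expansion and the $\Gamma_2[\hat h-h_0]$ step is additional detail rather than a different argument.
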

We can estimate the asymptotic variance $V_{\theta}$ by approaching the problem based on the parametric belief \citep{ackerberg2012practical}. \citet{chen2015sieve} and \citet{ackerberg2012practical} establish the numerical equivalence result of the asymptotic variance estimator based on the parametric belief and the corresponding estimator under the nonparametric specification.\footnote{Please refer to Section 5 of \citet{chen2015sieve} for more details.} Under the parametric belief, we assume that the true values of the unknown parameters lie in the linear sieve space such that $w^*_i(\cdot)={\beta_{o,w_i}}^TP_i(\cdot)$ and $g^*_i(\cdot)={\beta_{o,g_i}}^TQ_i(\cdot),$ for $i=1,\ldots,N$. Representing the nuisance parameters by their corresponding sieve coefficients $\beta_w$ and $\beta_g$, we can reformulate the criterion functions by adding the subscript $P$ as 
\begin{align*}
	&\varphi_{w,P}(Z,\beta_{w})=\varphi_{w}(Z,V(X)^T\beta_w),\\
	&\varphi_{g_i,P}(Z,\beta_{g_i})=\varphi_{g_i}(Z,{\beta_{g_i}}^TQ_i(\cdot)) \mbox{ for } i=1,\ldots,N.
\end{align*}

Let $\beta=[\beta_w^T,\beta_g^T]^T$ denote the sieve coefficients and $\beta_{o,P}=[\beta_{o,w}^T,\beta_{o,g}^T]^T$ denote their corresponding true values. The criterion function under the parametric belief is 
\begin{align*}
	\varphi_P(Z,\beta)=\varphi_{w,P}(Z,\beta_{w})+\sum_{i=1}^N \varphi_{g_i,P}(Z,\beta_{g_i}).   
\end{align*}
The second step moment function can be reformulated as 
\begin{align*}
	g_P(Z,\theta,\beta_{w},\beta_{g})=g\left(Z,\theta,\{{\beta_{w_i}}^TP_i(\cdot)\}_{i=1}^N,\{{\beta_{g_i}}^TQ_i(\cdot)\}_{i=1}^N\right).    
\end{align*}
The adjustment term in Theorem \ref{theorem: asymptotic normality} based on parametric belief can be decomposed into two parts as
\begin{equation}\label{equation: adjustment term based on the parametric belief}
	\Delta_P(Z,\beta_{o,w},\beta_{o,g})[v_P^*]=\Gamma_{2,w}(R_w)^{-1}\frac{\partial \varphi_{w,P}(Z,\beta_{o,w})}{\partial \beta_w}+\sum_{i=1}^{N}\Gamma_{2,g_i}(R_{g_i})^{-1}\frac{\partial \varphi_{g_i,P}(Z,\beta_{o,g_i})}{\partial \beta_{g_i}}.
\end{equation} 
The first part with the subscript $w$ characterizes the influence of the weight estimation. Here we let 
\begin{align*}
	&R_w=-\mathbb{E}[\frac{\partial^2 \varphi_{w,P}(Z,\beta_{o,w})}{\partial \beta_w\partial \beta_w^T}]=2\mathbb{E}[V(X)A(X)V(X)^T|D=0],\\
	&\Gamma_{2,w}=\frac{\partial \mathbb{E}[g(Z,\theta_0,\beta_{o,w},\beta_{o,g})]}{\partial \beta_w^T}=-E\left\{[g^*_1(X)P_1^T(X),\cdots,g^*_N(X)P_N^T(X)]|D=0\right\},\\
	&\frac{\partial \varphi_{w,P}(Z,\beta_{o,w})}{\partial \beta_w}=\frac{I_{D=0}}{P(D=0)}\left\{-2V(X)A(X)V(X)^T\beta_{o,w}+2V(X)b(X)\right\}.
\end{align*}
Recall that $V(x)=\mathrm{diag}(P_1(x),\cdots,P_N(x))$ and $P_i(x)$ is the basis function for estimating the weight $w_i^*(x)$ of the $i$-th source population.

The second part with the subscript $g$ characterizes the influence of the outcome regression estimation. Here we let
\begin{align*}
	&R_{g_i}=-\mathbb{E}[\frac{\partial^2 \varphi_{g_i,P}(Z,\beta_{o,g_i})}{\partial \beta_{g_i}\partial \beta_{g_i}^T}]=2\mathbb{E}[Q_i(X)Q_i(X)^T|A=1,D=i],\\
	&\Gamma_{2,g_i}=\frac{\partial \mathbb{E}[g(Z,\theta_0,\beta_{o,w},\beta_{o,g})]}{\partial \beta_{g_i}^T}=-\mathbb{E}[w^*_i(X)Q_i(X)^T|D=0],\\
	&\frac{\partial \varphi_{g_i,P}(Z,\beta_{o,g_i})}{\partial \beta_{g_i}}=\frac{I_{A=1,D=i}}{P(A=1,D=i)}\{-2Q_i(X)Q_i(X)^T\beta_{o,g_i}+2Q_i(X)Y\}.
\end{align*}  
Recall that $Q_i(x)$ is the basis function for estimating the outcome regression function $g_i^*(x)$.

In order to estimate the adjustment term, we need to consider its sample analog 
\begin{align*}
	\hat{\Delta}_P(Z,\hat{\beta}_{w},\hat{\beta}_{g})[\hat{v}_P]=\hat{\Gamma}_{2,w}(\hat{R}_w)^{-1}\frac{\partial \hat{\varphi}_{w,P}(Z,\hat{\beta}_w)}{\partial \beta_w}+\sum_{i=1}^{N}\hat{\Gamma}_{2,g_i}(\hat{R}_{g_i})^{-1}\frac{\partial \varphi_{g_i,P}(Z,\hat{\beta}_{g_i})}{\partial \beta_{g_i}}.   
\end{align*}

Note that the sample analog of the adjustment term can also be decomposed into two parts. For the first part, we let
\begin{align*}
	&\hat{\Gamma}_{2,w}=-\frac{1}{n_0}\sum_{i=1}^{n_0}[\hat{g}_1(x_{0,i})P_1^T(x_{0,i}),\cdots,\hat{g}_N(x_{0,i})P_N^T(x_{0,i})],\\
	&\hat{R}_w=\frac{2}{n_0}\sum_{i=1}^{n_0}V(x_{0,i})\hat{A}(x_{0,i})V^T(x_{0,i}),\\
	&\frac{\partial \hat{\varphi}_{w,P}(Z,\hat{\beta}_w)}{\partial \beta_w}=\frac{I_{D=0}}{P(D=0)}\left\{-2V(X)\hat{A}(X)V(X)^T\hat{\beta}_w+2V(X)\hat{b}(X)\right\}.	
\end{align*}

Recall that $\{x_{0,j}\}_{j=1}^{n_0}$ denotes the covariate data from the target population.

For the second part, we let
\begin{align*}
	&\hat{\Gamma}_{2,g_i}=\frac{1}{m_i}\sum_{j=1}^{m_i}\hat{w}_i(x_{i,j}')Q_i(x_{i,j}')^T,\\
	&\hat{R}_{g_i}=\frac{2}{m_i}\sum_{j=1}^{m_i}Q_i(x_{i,j}')Q_i(x_{i,j}')^T,\\
	&\frac{\partial \varphi_{g_i,P}(Z,\hat{\beta}_{g_i})}{\partial \beta_{g_i}}=\frac{I_{A=1,D=i}}{P(A=1,D=i)}\{-2Q_i(X)Q_i(X)^T\hat{\beta}_{g_i}+2Q_i(X)Y\}.
\end{align*}

Recall that $\{y_{i,j}',x_{i,j}'\}_{j=1}^{m_i}$ denotes the treatment group data from the $i$-th source population.

Given the sample analog of the adjustment term, we can define the sieve score estimator as 
\begin{align*}
	\hat{S}_{i,P}=g_P(Z_i,\hat{\theta},\hat{\beta}_{w},\hat{\beta}_{g})+\hat{\Delta}_P(Z_i,\hat{\beta}_{w},\hat{\beta}_{g})[\hat{v}_P],    
\end{align*}
where the second step moment function based on the parametric belief is defined as
\begin{align*}
g_P(Z,\hat{\theta},\hat{\beta}_{w},\hat{\beta}_{g})=g\left(Z,\hat{\theta},\{{\hat{\beta}_{w_i}}^TP_i(\cdot)\}_{i=1}^N,\{{\hat{\beta}_{g_i}}^TQ_i(\cdot)\}_{i=1}^N\right).
\end{align*}

Based on the sieve score estimator, we can construct the asymptotic variance estimator as 
\begin{align*}
\hat{V}_{\theta}=\frac{1}{n_T}\sum_{i=1}^{n_T}(\hat{S}_{i,P}-\frac{1}{n_T}\sum_{j=1}^{n_T}\hat{S}_{j,P})^2.
\end{align*}

\begin{proposition}
	\label{proposition: asymptotic variance estimator}
	Under assumptions of Theorem \ref{theorem: asymptotic normality} and Assumptions A3 and A4 in Appendix \ref{subsection: proof of proposition 6}, $\hat{V}_{\theta}$ is a consistent asymptotic variance estimator such that $\hat{V}_{\theta}\overset{p}{\to}V_{\theta}$.
	
 Based on the fact that
	\begin{align*}
		\sqrt{n_T}\hat{V}_{\theta}^{-\frac{1}{2}}(\hat{\theta}-\theta_0)\overset{d}{\to} N(0,1),
	\end{align*}
we can construct the asymptotic confidence interval with the coverage rate $1-\alpha$ for $ \theta_0$ as 
\begin{align}\label{equation: confidence interval}
	[\hat{\theta}-\frac{\hat{V}_{\theta}^{\frac{1}{2}}}{\sqrt{n_T}}z_{1-\frac{\alpha}{2}},\hat{\theta}+\frac{\hat{V}_{\theta}^{\frac{1}{2}}}{\sqrt{n_T}}z_{1-\frac{\alpha}{2}}],    
\end{align}
where $z_{1-\frac{\alpha}{2}}$ is the $1-\frac{\alpha}{2}$ quantile of the standard normal distribution.
\end{proposition}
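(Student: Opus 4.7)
The plan is to establish the proposition in three stages: (i) show that each finite-dimensional component appearing in the sample adjustment term $\hat{\Delta}_P$ converges in probability to its population analog; (ii) combine these to get $\hat{S}_{i,P} \to g(Z_i,\theta_0,h_0) + \Delta(Z_i,h_0)[v^*]$ in an appropriate $L^2$-like sense via the numerical-equivalence result of \citet{chen2015sieve} and \citet{ackerberg2012practical}; (iii) conclude that the sample variance of the sieve scores is consistent and derive the confidence interval from Theorem \ref{theorem: asymptotic normality} by Slutsky.

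For the first stage, I would use the consistency of the first-step sieve estimators $\hat{w}_i$ and $\hat{g}_i$ (which comes for free from the assumptions of Theorem \ref{theorem: asymptotic normality}, together with Assumptions A3 and A4 controlling sieve approximation error and moment regularity). Combining these with a uniform law of large numbers over the sieve space, I expect each of $\hat{R}_w$, $\hat{R}_{g_i}$, $\hat{\Gamma}_{2,w}$, $\hat{\Gamma}_{2,g_i}$ to converge to its population counterpart $R_w$, $R_{g_i}$, $\Gamma_{2,w}$, $\Gamma_{2,g_i}$. A technical point is that $\hat{A}(x)$ and $\hat{b}(x)$ embedded inside $\hat{R}_w$ depend on the conditional mean embedding estimator and must be shown to converge uniformly in $x$ on the support of the target covariate; this will follow from the CME consistency results cited in Section \ref{section: Weight estimation by minimizing the CMMD} provided the regularization parameters $\lambda_i$ decay at an appropriate rate, an assumption included in A3.

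For the second stage, by the invertibility assumption on $R_w$ and $R_{g_i}$ (part of A4) plus the continuous mapping theorem, the sample adjustment $\hat{\Delta}_P(Z_i,\hat{\beta}_w,\hat{\beta}_g)[\hat{v}_P]$ is a continuous functional of the quantities proven consistent above, evaluated at the observation $Z_i$. Applying the numerical-equivalence theorem of \citet{chen2015sieve}, the parametric-belief adjustment $\Delta_P(Z,\beta_{o,w},\beta_{o,g})[v_P^*]$ coincides with the semiparametric adjustment $\Delta(Z,h_0)[v^*]$ obtained in Theorem \ref{theorem: asymptotic normality}. Hence $\hat{S}_{i,P}$ is asymptotically an empirical version of the true influence function.

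For the third stage, I would invoke a standard sample-variance consistency argument: writing $\hat{V}_\theta = \frac{1}{n_T}\sum_i \hat{S}_{i,P}^2 - (\frac{1}{n_T}\sum_i \hat{S}_{i,P})^2$, and splitting each summand as the true influence function plus a vanishing remainder, I would use the $L^2$ law of large numbers for the main term and a Cauchy--Schwarz bound on cross terms (using A4 for a dominating envelope) to kill the remainder. This gives $\hat{V}_\theta \to V_\theta$ in probability. The confidence interval in Eqn.~\ref{equation: confidence interval} then follows immediately by combining Theorem \ref{theorem: asymptotic normality} with this consistency and Slutsky's theorem.

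The main obstacle, I expect, will be the uniform control of the plug-in quantities $\hat{A}(x)$ and $\hat{b}(x)$ appearing inside $\hat{R}_w$, $\hat{\Gamma}_{2,w}$, and $\partial \hat{\varphi}_{w,P}/\partial \beta_w$: these are themselves nonparametric CME-based estimators that enter as arguments of further empirical averages, so a careful decomposition of bias (kernel ridge approximation error) and variance (stochastic error) is required, with rates fast enough to ensure the $o_p(1)$ remainder in the sample variance. The rest of the argument is routine given the sieve semiparametric framework of \citet{chen2015sieve}.
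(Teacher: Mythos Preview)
Your proposal is correct and follows essentially the same route as the paper. The only cosmetic difference is packaging: where you sketch a hands-on sample-variance decomposition in stage (iii), the paper instead verifies the high-level conditions (their Assumptions A3 and A4 are Assumptions B.1 and B.2 of \citet{chen2015sieve}) and then invokes Theorem 3.1 of \citet{chen2015sieve} as a black box for $\hat{V}_\theta\overset{p}{\to}V_\theta$; the numerical-equivalence step is credited to Lemma D.1 and Theorem 5.2 of the same reference, and the CME consistency for $\hat{A}(x),\hat{b}(x)$ is handled exactly as you anticipate, via the continuous mapping theorem together with the learning-rate result of \citet{li2021improving}.
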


\section{Experimental Results}

We conduct both the simulated experiment and the real data experiment to compare the performance of our method with the classical transportability method.

\subsection{Simulated Experiment}\label{section: Simulated experiment}

In order to further support Examples \ref{example: mixture distribution} and \ref{example: violation of mean exchangeability}, the simulated experiment is designed according to their settings. For the simulated experiment, we consider three source populations. We generate the control group potential outcome of the source populations from the mixture normal distributions. Like in Example \ref{example: mixture distribution}, the control group distributions of the source populations share the same mixture components but differ in the mixture weights: 
\begin{align*}
	Y(0)|D=i,X=x\sim\sum_{j=1}^{3}\pi_{i,j}(x)N(a_jx+b_j,1).	    
\end{align*}

We generate the control group potential outcome of the target population by the weighted mixture distribution of the source populations:
\begin{align*}
	P_{Y(0)|X=x,D=0}=\sum_{i=1}^{N}w^*_i(x)P_{Y(0)|X=x,D=i}.
\end{align*}

The weights $\{w^*_i(x)\}_{i=1}^N$ and $\{\pi_{i,j}(x)\}_{j=1}^3$ obey the multinomial linear logistic model such that $w^*_i(x)=\frac{exp(c_ix+d_i)}{\sum_{j=1}^{N}exp(c_jx+d_j)}$ and $\pi_{i,j}(x)=\frac{exp(e_{i,j}x+f_{i,j})}{\sum_{k=1}^{3}exp(e_{i,k}x+f_{i,k})}$.

In accordance with Example \ref{example: violation of mean exchangeability}, for the treatment group of the source populations, we first generate the covariate and the unobserved intermediate control group counterfactual outcome. Then we generate the treatment group potential outcome from the intermediate control group counterfactual outcome through the counterfactual expectation transition mechanism. For the $i$-th population, we generate the treatment group potential outcome for an individual with the intermediate control group counterfactual outcome $y$ and the covariate $x$ as 
\begin{align*} 
		Y(1)&=\mathbb{E}[Y(1)|Y(0)=y,X=x,D=0]+\epsilon=g_1 y+g_2 x+g_3 xy+\epsilon,
\end{align*} 
where $\mathbb{E}[\epsilon|Y(0),X,D=i]=0$.

In order to evaluate the performance under a wide range of experimental conditions, we generate the parameters of the data generating process from the normal distribution for each round of the experiments instead of restricting the evaluation to a fixed set of parameters. Please refer to Appendix \ref{subsection: simulated experiments} for a detailed description of the data generating process.     

We implement our proposed two methods \textbf{Sieve} and \textbf{Point}, with comparison with the baseline methods \textbf{Pool}~\citep{dahabreh2020towards} and \textbf{Uniform}.
\begin{description}[nosep]
	\item[\textbf{Sieve }]  We apply the sieve extremum estimation method to estimate the weights according to Eqn.~\ref{equation: minimizing average cmmd}.
	\item[\textbf{Point }]  We employ pointwise weight estimation methods in Theorem \ref{proposition: estimation} both with and without the constraint that the weight estimators are non-negative and add up to one. 
	\item[\textbf{Pool }]  Pool the data from the source populations and estimate the outcome regression function. Then use the covariate information from the target population to estimate $\theta_0$. It corresponds to the classical transportability method based on the mean exchangeability assumption.
	\item[\textbf{Uniform }]  Use the uniform weights such that $w_i(x)\equiv\frac{1}{N}$ for any $i$ and $x$.
\end{description}

The estimation performance was measured by the mean relative error (MRE):
\begin{align*}
	\mathrm{MRE}\triangleq\frac{1}{M}\sum_{i=1}^M\left\vert\frac{\hat{\theta}_i-\hat{\theta}_{0,i}}{\hat{\theta}_{0,i}}\right\vert.
\end{align*}
Here we refer to $\hat{\theta}_i$ as the synthetic treatment estimator in Eqn.~\ref{equation: treatment group expectation estimator} of the $i$-th round of the experiments and $M$ denotes the total number of iterations. We refer to $\hat{\theta}_{0,i}$ as the sample average estimator for the treatment group expectation $\theta_0=E[Y(1)|D=0]$ which serves as the ground truth. It is estimated by the unobserved treatment group data of the target population.

The mean relative error of the simulated experiment is summarized in Table \ref{table: MAR simulated experiments} with the standard errors in the brackets. Recall that the label "Sieve" refers to the synthetic treatment group esimator calculated by Eqn.~\ref{equation: treatment group expectation estimator} and the label "Constrained" refers to the estimator based on the pointwise weight estimator with the constraint that the weights are nonnegative and adds up to one. The label "Sample size" refers to the sample size of the control group of each source population and the target population. As shown in Table \ref{table: MAR simulated experiments}, our proposed methods (with labels as "Sieve", "Constrained" and "Unconstrained") outperform two baseline methods (with labels as "Uniform" and "Pool") in all situations. In consistent with the theoretical result in Example \ref{example: violation of mean exchangeability}, the baseline methods fail to identify the treatment group expectation. 

\begin{table}[!ht]
	\centering
	\caption{Mean Relative Error of the Simulated Experiments.}\label{table: MAR simulated experiments}
	\begin{tabular}{llllll}
		\hline
		Sample size & Sieve & Constrained & Unconstrained & Uniform & Pool \\ \hline
		4000 & 0.042(0.062) & 0.043(0.063) & 0.042(0.058) & 0.879(1.238) & 0.881(1.242) \\ 
		3000 & 0.058(0.109) & 0.059(0.111) & 0.062(0.115) & 0.999(1.833) & 1.002(1.822) \\ 
		2000 & 0.09(0.236) & 0.094(0.249) & 0.088(0.233) & 1.987(6.956) & 1.978(6.927) \\ \hline
	\end{tabular}
\end{table}

Figure \ref{fig:weight estimation} demonstrates the performance of weight estimation for the first source population where the label "source" refers to the true value. The pointwise weight estimator oscillates drastically around the true value. In contrast, the weight estimator based on the sieve extremum estimation closely approximates the true value. Compared with the pointwise weight estimator, the sieve extremum estimation alleviates the problem of overfitting. 

The CMMD between two source populations can reflect the difficulty of the weight estimation. Figure \ref{fig: conditional maximum mean discrepancy between weighted source populations and the target population.} depicts the CMMD between the weighted source populations and the target population calculated by Eqn.~\ref{equation: CMMD calculation}. When we use the proposed method, the CMMD between the weighted source populations and the target population is near zero. By contrast, the CMMD based on the uniform weights is much larger as shown by the red curve in Figure \ref{fig: conditional maximum mean discrepancy between weighted source populations and the target population.}. This indicates that the simple average of the source populations can't approximate the target population well. It confirms with the poor performance of the baseline method based on the uniform weights. The additional experimental results in Appendix \ref{subsection: simulated experiments} demonstrate that our proposed methods consistently outperform the baseline methods.
\begin{figure}[H]
    \centering
    \includegraphics[width=0.5\textwidth]{"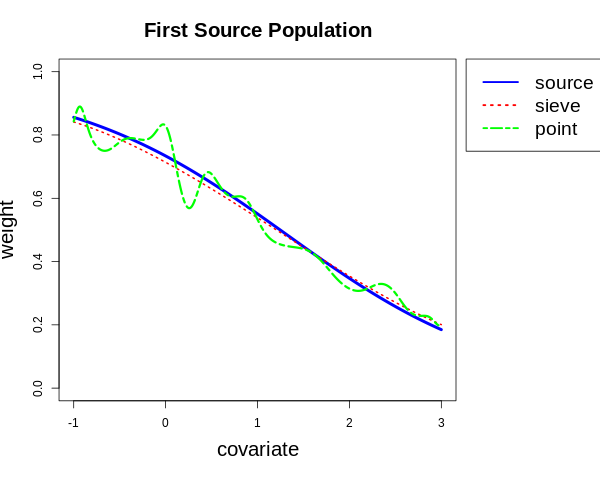"}
    \caption{Weight estimators for the first population. The X-axis represents the covariate variable $x$ and the Y-axis depicts the weight function $w_i(x)$. The label "source" represents the true value of the weight function. The green wiggly curve represents the pointwise weight estimator. The purple smooth curve represents the weight estimator based on the sieve extremum estimation.}\label{fig:weight estimation}
\end{figure}
\begin{figure}[H]
    \centering
    \includegraphics[width=0.5\textwidth]{"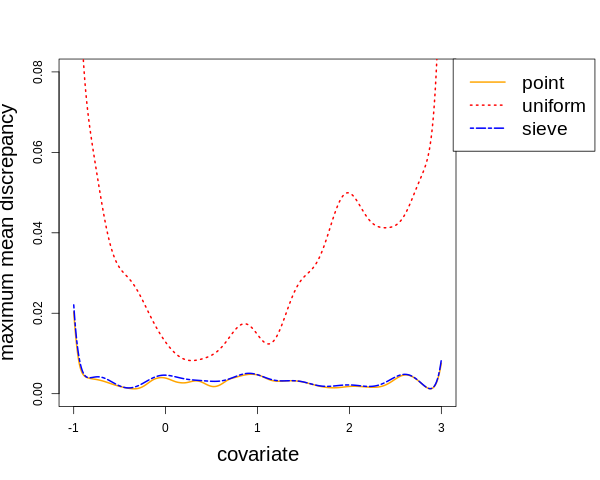"}
    \caption{Estimated CMMD between the weighted source populations and the target population. The X-axis represents the covariate variable $x$ and the Y-axis depicts the CMMD $\hat{d}(x,w(x))$ estimated by Eqn.~\ref{equation: CMMD calculation}. The labels "sieve" and "point" represent the CMMDs based on the sieve weight estimator and the pointwise weight estimator, respectively. The label "uniform" refers to the CMMD based on the uniform weight.}\label{fig: conditional maximum mean discrepancy between weighted source populations and the target population.}
\end{figure}


The coverage rate of the asymptotic confidence interval in Eqn.~\ref{equation: confidence interval} is summarized in Table \ref{table: coverage rate of confidence interval}. It demonstrates that the asymptotic confidence interval can approach the nominal coverage rate when the sample size is sufficiently large.
\begin{table}[!ht]
	\centering
	\caption{The coverage rate of the asymptotic confidence interval.}\label{table: coverage rate of confidence interval}
	\begin{tabular}{lll}
		\hline
		Sample size & 95\%CI & 90\%CI \\ \hline
		4000 & 0.94 & 0.91 \\ 
		3000 & 0.93 & 0.87 \\ 
		2000 & 0.94 & 0.87 \\ \hline
	\end{tabular}
\end{table}

\subsection{Real Data Experiment}

We apply our methods to the microcredit data from \citet{meager2019understanding}. It contains the data from seven RCTs conducted in seven countries in order to evaluate the impact of the expansion of access to microcredit on household business profit, expenditures and revenues. The goal of the RCTs is to investigate whether the expansion of access to microcredit encourages the poor household to open up new business. 

The RCTs are conducted in seven countries: Mexico, Mongolia, Bosnia, India, Morocco, Philippines and Ethiopia. Different RCTs contain different outcome variables and covariate variables. In order to let the source populations and the target population share the largest number of outcome variables, we choose the data from five countries: Mexico, Mongolia, Bosnia, India and Morocco. In each round of the experiment we choose one country as the target population and set the other countries as the source populations. We set aside the treatment group data from the target population as the test data. The estimated treatment group expectation of the target population is treated as the ground truth value. 

There are five outcome variables shared by all of the RCTs: profit, consumption, expenditures, temptation and revenues. In addition, there are three covariate variables (i.e., existing business, income, assets) which are shared by all the RCTs. Because the economic outcome variable may take zero or negative values, we use the inverse hyperbolic sine transformation to transform the outcome variable \citep{Pence2006TheRO}. Following the recommendation of \citep{Egami2020ElementsOE}, in order to compare the results between different randomized controlled trials, we use the mean and standard deviation
of the control group potential outcome to standardize the outcome variables. When we choose one variable as the outcome variable, we let the other variables be the covariate variables. Because the covariate is 7-th dimensional, it may be difficult to choose the appropriate sieve basis functions. Consequently, we only apply the pointwise weight estimation method. 

For each outcome variable, we calculate the mean squared error of the experimental results. The mean squared error of the real data experiment is summarized in Table \ref{table: Mean squared error of Meager dataset}. The column names represent the outcome variable of each experiment. As shown in Table \ref{table: Mean squared error of Meager dataset}, our method based on the constrained weights outperforms the baseline methods in most situations.  

\begin{table}[H]
	\centering
	\caption{Mean squared error of the real data experiment.}\label{table: Mean squared error of Meager dataset}		\begin{tabular}{ l l l l l l }
		\hline
		Outcome & Profit & Consumption & Expenditures & Temptation & Revenues \\ \hline
		Constrained & 0.018(0.021) & 0.01(0.014) & 0.024(0.042) & 0.009(0.009) & 0.033(0.036)  \\ 
		Unconstrained & 0.097(0.13) & 0.019(0.028) & 0.397(0.455) & 0.083(0.178) & 0.366(0.465)  \\ 
		Uniform & 0.02(0.019) & 0.008(0.011) & 0.033(0.041) & 0.01(0.009) & 0.083(0.086)  \\ 
		Pool & 0.098(0.14) & 0.025(0.04) & 0.033(0.007) & 0.014(0.017) & 0.986(1.228) \\ \hline
	\end{tabular}
\end{table}


\section{Conclusion}
In this paper we have discussed the main assumptions of previous works and proposed two alternative identification assumptions when the mean exchangeability assumption is violated. By constructing a synthetic treatment group for the target population, we have introduced a complementary approach to achieve transportability. We have estimated the weights by minimizing the CMMD through a quadratic programming algorithm. The sieve extremum estimation can alleviate the problem of overfitting caused by the pointwise weight estimation. Formulating the problem as the sieve semiparametric two-step estimation, we have established the asymptotic normality of the proposed estimator. Experiments on both the simulated and the real data have demonstrated the effectiveness of our method compared with the baseline methods.

\setcitestyle{numbers}

\bibliography{samples.bib}




\newpage

\appendix

\section{Proofs of the Results in Sections \ref{section: Identification without the mean exchangeability assumption} and \ref{section: Weight estimation by minimizing the CMMD}}

\subsection{Proof of Proposition \ref{proposition: weaker than the distribution exchangeability assumption}}

\begin{proof} 
    Due to the distribution exchangeability assumption, we have that
    \begin{align*}
    \Pr(Y(0) \leq y|X=x,D=0) = \Pr(Y(0) \leq y|X=x,D=i)    
    \end{align*}
    for all $y$ and $i$. 
     It implies that 
    \begin{align*}
 \Pr(Y(0) \leq y|X=x,D=0)&=\left(\sum_{i=1}^{N}w_i^*(x)\right) \Pr(Y(0) \leq y|X=x,D=0)\\
 &=\sum_{i=1}^{N}w_i^*(x) \Pr(Y(0) \leq y|X=x,D=i).    
    \end{align*}
    as long as the weights sum up to one such that $\sum_{i=1}^{N}w_i^*(x)=1$.
\end{proof}

\subsection{Proof of Theorem \ref{proposition: identification}}
\begin{proof} 
	Due to the law of total expectation, we have
	\begin{align*}
		&\mathbb{E}[Y(1)|X=x,D=0]\\
            &=\int \mathbb{E}[Y(1)|Y(0)=y,X=x,D=0]\,\mathrm{d}\Pr(Y(0)\leq y|X=x,D=0)\\
		&=\int \mathbb{E}[Y(1)|Y(0)=y,X=x,D=0]\, \mathrm{d}\left(\sum_{i=1}^{N}w^*_i(x)\Pr(Y(0)\leq y|X=x,D=i)\right)\\
		&=\sum_{i=1}^{N}w^*_i(x)\int \mathbb{E}[Y(1)|Y(0)=y,X=x,D=i]\,\mathrm{d}\Pr(Y(0)\leq y|X=x,D=i)\\
		&=\sum_{i=1}^{N}w^*_i(x)\mathbb{E}[Y(1)|X=x,D=i],
	\end{align*}
	where the first and the fourth equality follow from the law of total expectation, the second equality follows from Assumption \ref{asumption:mixture distribution} and the third equality follows from Assumption \ref{assumption: the invariance of counterfactual expectation transition mechanism}.
	
	Due to the strong ignorability assumption and the SUTVA assumption, we have
	\begin{align*}
		\mathbb{E}[Y(1)|X=x,D=i]=\mathbb{E}[Y(1)|X=x,A=1,D=i]=\mathbb{E}[Y|X=x,A=1,D=i],
	\end{align*}
	where $i\in\{1, 2, \ldots, N\}$. The result of Theorem \ref{proposition: identification} follows naturally.	
\end{proof}
\subsection{Proof of Proposition \ref{proposition: generalize the identification formula under the mean exchangeability assumption}}
\begin{proof} 
    When Assumptions \ref{assumption: the invariance of counterfactual expectation transition mechanism} and \ref{asumption:mixture distribution} are violated but the mean exchangeability assumption holds, we have
	\begin{align*}
		\mathbb{E}[Y(1)|X=x,D=0]=\mathbb{E}[Y(1)|X=x,D=i],   
	\end{align*}
	for $i\in\{1, 2, \ldots, N\}$.
	
	Due to the fact that
	\begin{align*}
		\sum_{i=1}^{N}w_i^*(x)\mathbb{E}(Y(1)|X=x,D=i)&= \left(\sum_{i=1}^{N}w_i^*(x)\right)\mathbb{E}(Y(1)|X=x,D=0)\\
		&= \mathbb{E}(Y(1)|X=x,D=0),  
	\end{align*}
	the identification formulae (Eqns.~\ref{equation: 1} and \ref{equation: 2}) still hold as long as $\sum_{i=1}^{N}w^*_i(x)=1$.
\end{proof}

\subsection{Proof of Example \ref{example: violation of mean exchangeability}}\label{subsection: proof of example 2}
\begin{proof} 
	Due to the fact that:
	
	{\centering
		$ \displaystyle
		\begin{aligned} 
			&\mathbb{E}[Y(1)|X=x,D=i]\\
			&=\int \mathbb{E}[Y(1)|Y(0)=y,X=x,D=i]P(Y(0)=y|X=x,D=i) dy,		
		\end{aligned}$ 
		\par}
	we have
	\begin{align*}
		\mathbb{E}[Y(1)|X=x,D=i]&=E\{\mathbb{E}[Y(1)|X,Y(0),D]|X=x,D=i\}\\
		&=c(x)a_ix+c(x)b_i+d(x).
	\end{align*}
	Through some algebra, we can compare the potential outcome means between the source populations as follows:
	\begin{align*}
		\mathbb{E}[Y(1)|X=x,D=i]-\mathbb{E}[Y(1)|X=x,D=j]&=c(x)\left[(a_i-a_j)x+b_i-b_j\right]\\
		\mathbb{E}[Y(0)|X=x,D=i]-\mathbb{E}[Y(0)|X=x,D=j]&=(a_i-a_j)x+b_i-b_j
	\end{align*} 
	We can also compare the conditional average treatment effect between the $i$-th trial and the $j$-th trial:
	\begin{align}
		\mathbb{E}[Y(1)-Y(0)|X=x,D=i]-\mathbb{E}[Y(1)-Y(0)|X=x,D=j]=(c(x)-1)\left[(a_i-a_j)x+b_i-b_j\right].     \end{align}
	Consequently, if $\left\{\begin{aligned}
		c(x)&\neq 0\\
		c(x)&\neq 1\\
		a_i&\neq a_j
	\end{aligned}\right.$, then $\left\{\begin{aligned}
		\mathbb{E}[Y(1)-Y(0)|X=x,D=i]&\neq \mathbb{E}[Y(1)-Y(0)|X=x,D=j]\\
		\mathbb{E}[Y(0)|X=x,D=i]&\neq \mathbb{E}[Y(0)|X=x,D=j]\\
		\mathbb{E}[Y(1)|X=x,D=i]&\neq \mathbb{E}[Y(1)|X=x,D=j]
	\end{aligned}\right.$
\end{proof}

\subsection{Proof of Theorem \ref{proposition: estimation}}

\begin{proof} 
	As mentioned in the main text, according to the linearity of integral operator, we have
	\begin{align*}
		\mu_{\sum_{i=1}^{N}w_i(x)P_{Y(0)|X=x,D=i}}&=\int \ell(y,\cdot)[\sum_{i=1}^{N}w_i(x)\Pr(Y(0)=y|X=x,D=i)]dy\\
		&=\sum_{i=1}^{N}w_i(x)\int \ell(y,\cdot)\Pr(Y(0)=y|X=x,D=i)dy\\
		&=\sum_{i=1}^{N}w_i(x)\mu_{Y(0)|X=x,D=i}.
	\end{align*}
	
	Substituting the CME estimator into the definition of the CMMD, we obtain that
	\begin{align*}
		\lefteqn{\hat{d}(x,w(x))} \\
		& =  \Vert \hat{\mu}_{\sum_{i=1}^{N}w_i(x)P_{Y(0)|X=x,D=i}}-\hat{\mu}_{P_{Y(0)|X=x,D=0}}\Vert^2_{\mathcal{G}}\\
		& =  \langle \sum_{i=1}^{N}w_i(x)\hat{\mu}_{Y(0)|X=x,D=i}-\hat{\mu}_{Y(0)|X=x,D=0},\sum_{i=1}^{N}w_i(x)\hat{\mu}_{Y(0)|X=x,D=i}-\hat{\mu}_{Y(0)|X=x,D=0} \rangle_\mathcal{G}\\
		& =  \langle C(x)w(x)- l_0^TM_0k_0(x)
		, C(x)w(x)- l_0^TM_0k_0(x)\rangle_\mathcal{G}\\	
		& = \left(C(x)w(x)- l_0^TM_0k_0(x)
		\right)^T\left(C(x)w(x)- l_0^TM_0k_0(x)\right)\\
		&=w(x)^TC(x)^TC(x)w(x)-2w(x)^TC(x)^T\begin{bmatrix}
			l_0^TM_0k_0(x)
		\end{bmatrix}  + k_0^T(x)M_0l_0l_0^TM_0k_0(x)\\
		&=w(x)^T\hat{A}(x)w(x)-2w(x)^T\hat{b}(x)+k_0^T(x)M_0l_0l_0^TM_0k_0(x),
	\end{align*}
 where we refer to $C(x)$ as $\left[l_1^TM_1k_1(x),\cdots, l_N^TM_Nk_N(x)\right]$.
 
 We can verify that $\hat{A}(x)=\begin{bmatrix}
		k_1^T(x)M_1l_1 \\
		\vdots \\
		k_N^T(x)M_Nl_N
	\end{bmatrix}\left[l_1^TM_1k_1(x),\cdots, l_N^TM_Nk_N(x)\right]$ and\\ $\hat{b}(x)=\begin{bmatrix}
		k_1^T(x)M_1l_1 \\
		\vdots \\
		k_N^T(x)M_Nl_N
	\end{bmatrix}\begin{bmatrix}
		l_0^TM_0k_0(x)
	\end{bmatrix}$.
	
	Recall that $\ell_i^TM_ik_i(x)$ is an estimator for $\mu_{Y(0)|X=x,D=i}$, for $i=0, 1, \ldots, N$. Consequently, $[\hat{A}(x)]_{i,j}=\langle\hat{\mu}_{Y(0)|X=x,D=i},\hat{\mu}_{Y(0)|X=x,D=j}\rangle_\mathcal{G}$ is a plug-in estimator for $[A(x)]_{i,j}=\langle\mu_{Y(0)|X=x,D=i},\mu_{Y(0)|X=x,D=j}\rangle_\mathcal{G}.$ And $[\hat{b}(x)]_i=\langle\hat{\mu}_{Y(0)|X=x,D=i},\hat{\mu}_{Y(0)|X=x,D=0}\rangle_\mathcal{G}$ is a plug-in estimator for $[b(x)]_i=\langle\mu_{Y(0)|X=x,D=i},\mu_{Y(0)|X=x,D=0}\rangle_\mathcal{G}.$   	
	
	Because $\hat{A}(x)=\begin{bmatrix}
		k_1^T(x)M_1l_1 \\
		\vdots \\
		k_N^T(x)M_Nl_N
	\end{bmatrix}\begin{bmatrix}
		k_1^T(x)M_1l_1 \\
		\vdots \\
		k_N^T(x)M_Nl_N
	\end{bmatrix}^T$ is a positive semidefinite matrix, $\hat{d}(x,w(x))$ is a convex quadratic function with respect to $w(x)$.
	
	We can solve the following equation:
	\begin{align*}
		\frac{\partial \hat{d}(x,w(x))}{\partial w(x)}= 2\hat{A}(x)w(x)-2\hat{b}(x)=0.   
	\end{align*}
	
	It follows that $\hat{d}(x,w(x))$ is minimized when $w(x)=\hat{A}(x)^{-1}\hat{b}(x)$.

	We can construct a plug-in estimator for $\mathbb{E}[d(X,w(X)|D=0]$ as
	\begin{align*}
		\frac{1}{n_0}\sum_{i=1}^{n_0}\hat{d}(x_{0,i},w(x_{0,i})).    
	\end{align*}
	When we use the sieve method to estimate the weights, we can substitute $w(x)=V(x)^T\beta_w$ into the above equation to obtain Eqn.~\ref{equation: minimizing average cmmd}.
\end{proof}

\section{Derivations of the asymptotic variance and its corresponding estimator based on the parametric belief }
In order to estimate the asymptotic variance $V_{\theta}$, we can investigate the problem from a parametric perspective. \citet{chen2015sieve} and \citet{ackerberg2012practical} establish the numerical equivalence result of the asymptotic variance estimator based on the parametric belief and the corresponding estimator under the nonparametric specification. Please refer to Section 5 of \citet{chen2015sieve} for more details.  

Under the parametric belief, we presume that the true value of the unknown parameters lies in the linear sieve space such that $w^*_i(\cdot)={\beta_{o,w_i}}^TP_i(\cdot)$ and $g^*_i(\cdot)={\beta_{o,g_i}}^TQ_i(\cdot),$ for $i=1,\ldots,N$.

Representing the nuisance parameter by its corresponding sieve coefficients $\beta_w$ and $\beta_g$, we can reformulate the criterion functions by adding the subscript $P$ as 
\begin{align*}
	\varphi_{w,P}(Z,\beta_{w})=\varphi_{w}(Z,V(X)^T\beta_w),\\
	\varphi_{g_i,P}(Z,\beta_{g_i})=\varphi_{g_i}(Z,{\beta_{g_i}}^TQ_i(\cdot)).
\end{align*}

Denote $\beta=[\beta_w^T,\beta_g^T]^T$ and $\beta_{o,P}=[\beta_{o,w}^T,\beta_{o,g}^T]^T$. The criterion function under the parametric belief is 
\begin{align*}
	\varphi_P(Z,\beta)=\varphi_{w,P}(Z,\beta_{w})+\sum_{i=1}^N \varphi_{g_i,P}(Z,\beta_{g_i}).   
\end{align*}

The second step moment function can be reformulated as 
\begin{align*}
	g_P(Z,\theta,\beta_{w},\beta_{g})=g\left(Z,\theta,\{{\beta_{w_i}}^TP_i(\cdot)\}_{i=1}^N,\{{\beta_{g_i}}^TQ_i(\cdot)\}_{i=1}^N\right).    
\end{align*}

The expression of the adjustment term in Eqn.~\ref{equation: adjustment term based on the parametric belief} based on the parametric belief comes from applying the sieve semiparametric two-step estimation method. According to Section 5.1 of \citet{chen2015sieve}, let 
\begin{align*}
	&R_{P}=-\mathbb{E}[\frac{\partial^2 \varphi_{P}(Z,\beta_{o,P})}{\partial \beta\partial \beta^T}]\\
    &\quad=\begin{bmatrix}
		-\mathbb{E}[\frac{\partial^2 \varphi_{w,P}(Z,\beta_{o,w})}{\partial \beta_w\partial \beta_w^T}] &  &  &  \\
		& -\mathbb{E}[\frac{\partial^2 \varphi_{g_1,P}(Z,\beta_{o,g_1})}{\partial \beta_{g_1}\partial \beta_{g_1}^T}] &  &  \\
		&  & \ddots &  \\
		&  &  & -\mathbb{E}[\frac{\partial^2 \varphi_{g_N,P}(Z,\beta_{o,g_N})}{\partial \beta_{g_N}\partial \beta_{g_N}^T}]
	\end{bmatrix},\\
	&\Gamma_{2,P}=\frac{\partial \mathbb{E}[g_P(Z,\theta_0,\beta_{o,P})]}{\partial \beta^T}\\
 &\quad\quad = \left[\frac{\partial \mathbb{E}[g_P(Z,\theta_0,\beta_{o,w},\beta_{o,g})]}{\partial \beta_w^T},
	\frac{\partial \mathbb{E}[g_P(Z,\theta_0,\beta_{o,w},\beta_{o,g})]}{\partial \beta_{g_1}^T},
	\cdots ,
	\frac{\partial \mathbb{E}[g_P(Z,\theta_0,\beta_{o,w},\beta_{o,g})]}{\partial \beta_{g_N}^T}
	\right],\\
	&\frac{\partial \varphi_{P}(Z,\beta_{o,P})}{\partial \beta}=\begin{bmatrix}
		\frac{\partial \varphi_{w,P}(Z,\beta_{o,w})}{\partial \beta_w} \\
		\frac{\partial \varphi_{g_1,P}(Z,\beta_{o,g_1})}{\partial \beta_{g_1}} \\
		\vdots \\
		\frac{\partial \varphi_{g_N,P}(Z,\beta_{o,g_N})}{\partial \beta_{g_N}}
	\end{bmatrix}.
\end{align*}
Substituting the above equation into  Section 5.1 of \citet{chen2015sieve}, the adjustment term based on parametric belief can be decomposed into two parts
\begin{align}\label{equation: asymptotic variance based on parametric belief}
	\Delta_P(Z,\beta_{o,w},\beta_{o,g})[v_P^*]&=\Gamma_{2,P}(R_{P})^{-1}\frac{\partial \varphi_{P}(Z,\beta_{o,P})}{\partial \beta}\notag\\
	&=\Gamma_{2,w}(R_w)^{-1}\frac{\partial \varphi_{w,P}(Z,\beta_{o,w})}{\partial \beta_w}+\sum_{i=1}^{N}\Gamma_{2,g_i}(R_{g_i})^{-1}\frac{\partial \varphi_{g_i,P}(Z,\beta_{o,g_i})}{\partial \beta_{g_i}},    
\end{align}
where $v^*_P$ denotes the Riesz representer under the parametric belief. 

The first part with the subscript $w$ reflects the influence of the weight estimation. We let 
\begin{align*}
	&R_w=-\mathbb{E}[\frac{\partial^2 \varphi_{w,P}(Z,\beta_{o,w})}{\partial \beta_w\partial \beta_w^T}]=2\mathbb{E}[V(X)A(X)V(X)^T|D=0],\\
	&\Gamma_{2,w}=\frac{\partial \mathbb{E}[g(Z,\theta_0,\beta_{o,w},\beta_{o,g})]}{\partial \beta_w^T}=-E\{[g^*_1(X)P_1^T(X),\cdots,g^*_N(X)P_N^T(X)]|D=0\},\\
	&\frac{\partial \varphi_{w,P}(Z,\beta_{o,w})}{\partial \beta_w}=\frac{I_{D=0}}{P(D=0)}\left\{-2V(X)A(X)V(X)^T\beta_{o,w}+2V(X)b(X)\right\}.
\end{align*}
Recall that $V(x)=\mathrm{diag}(P_1(x),\cdots,P_N(x))$ and $P_i(x)$ is the basis function for estimating the weight $w_i^*(x)$ of the $i$-th source population.

The second part with the subscript $g$ reflects the influence of the outcome regression estimation. We let
\begin{align*}
	&R_{g_i}=-\mathbb{E}[\frac{\partial^2 \varphi_{g_i,P}(Z,\beta_{o,g_i})}{\partial \beta_{g_i}\partial \beta_{g_i}^T}]=2\mathbb{E}[Q_i(X)Q_i(X)^T|A=1,D=i],\\
	&\Gamma_{2,g_i}=\frac{\partial \mathbb{E}[g(Z,\theta_0,\beta_{o,w},\beta_{o,g})]}{\partial \beta_{g_i}^T}=-\mathbb{E}[w^*_i(X)Q_i(X)^T|D=0],\\
	&\frac{\partial \varphi_{g_i,P}(Z,\beta_{o,g_i})}{\partial \beta_{g_i}}=\frac{I_{A=1,D=i}}{P(A=1,D=i)}\{-2Q_i(X)Q_i(X)^T\beta_{o,g_i}+2Q_i(X)Y\}.
\end{align*}  
Recall that $Q_i(x)$ is the basis function for estimating the outcome regression function $g_i^*(x)$.

We can construct the plug-in estimator to estimate the adjustment term in Eqn.~\ref{equation: asymptotic variance based on parametric belief} such that
\begin{align*}
	\hat{\Delta}_P(Z,\hat{\beta}_{w},\hat{\beta}_{g})[\hat{v}_P]=\hat{\Gamma}_{2,w}(\hat{R}_w)^{-1}\frac{\partial \hat{\varphi}_{w,P}(Z,\hat{\beta}_w)}{\partial \beta_w}+\sum_{i=1}^{N}\hat{\Gamma}_{2,g_i}(\hat{R}_{g_i})^{-1}\frac{\partial \varphi_{g_i,P}(Z,\hat{\beta}_{g_i})}{\partial \beta_{g_i}}.   
\end{align*}

\section{Proofs of the Results in Section \ref{section: Sieve semiparametric two-step estimation}}
\subsection{Proof of Theorem \ref{theorem: asymptotic normality}}\label{subsection: proof of theorem 5}

Denote $\mathcal{S}$ as the infinite dimensional parameter space of the nuisance parameter $h$ with a problem specific pseudo-metric $\|\cdot\|_{\mathcal{S}}$. According to \citet{chen2015sieve}, we define a local pseudo metric 
\begin{align*}
	\|h-h_0\|=\{-\frac{\partial^2 Q(h_0+\tau(h-h_0))}{\partial \tau^2}|_{\tau=0}\}^{1/2}
\end{align*}
for any $h\in \mathcal{S}$. Let $\mathcal{V}$ be the closed linear span of $\mathcal{S}-h_0$ under $\|\cdot\|$.

Note that $v^*\in\mathcal{V}$ in Theorem \ref{theorem: asymptotic normality} is the Riesz representer of the Gateaux derivative $\Gamma_2(\theta_0,h_0)[v]=\frac{\partial \mathbb{E}[g(Z,\theta_0,h_0+\tau v)]}{\partial \tau}|_{\tau=0}$ with respect to this pseudo-metric such that 
\begin{align*}
	\Gamma_2(\theta_0,h_0)[v]=\frac{\partial \mathbb{E}[g(Z,\theta_0,h_0+\tau v)]}{\partial \tau}|_{\tau=0}=\langle v^*,v\rangle 
\end{align*}
for any  $v\in \mathcal{V}$, where $\langle\cdot,\cdot\rangle$ is the inner product induced by the norm $\|\cdot\|$. The definition of the Riesz representer  $v^*$ follows from Equation (16) of \citep{chen2015sieve}.

Denote $v_{n_T}^*\in \mathcal{V}_{n_T}$ as the sieve Riesz representer with respect to the linear sieve space $\mathcal{V}_{n_T}$ such that 
\begin{align*}
	\Gamma_2(\theta_0,h_0)[v]=\frac{\partial \mathbb{E}[g(Z,\theta_0,h_0+\tau v)]}{\partial \tau}|_{\tau=0}=\langle v_{n_T}^*,v\rangle 
\end{align*}
for any  $v\in \mathcal{V}_{n_T}$. The linear sieve space $\mathcal{V}_{n_T}=\mathcal{S}_{n_T}$ becomes dense in $\mathcal{V}$ as $n_T\to \infty$. The definition of the sieve Riesz representer  $v_{n_T}^*$ follows from Equation (17) of \citep{chen2015sieve}.

In order for Theorem \ref{theorem: asymptotic normality} to hold, we need the following regularity assumptions from \citet{chen2015sieve}.

\begin{assumptionA}[Assumption A.1 of \citet{chen2015sieve}]\label{Assumption A.1}
	
	The first-step sieve extremum estimator $\widehat{h}$ satisfies:
	\begin{enumerate}
		\item $\left|n_T^{-1} \sum_{i=1}^{n_T} \Delta\left(Z_i, h_o\right)\left[v_{n_T}^*\right]-\left\langle v_{n_T}^*, \widehat{h}-h_o\right\rangle\right|=o_p\left(n_T^{-1 / 2}\right)$;
		\item $\left\|\widehat{h}-h_o\right\| \times \left\|v_{n_T}^*-v^*\right\|=o_p\left(n_T^{-1 / 2}\right)$. 
	\end{enumerate}
\end{assumptionA}
Assumption \ref{Assumption A.1} connects the adjustment term $\Delta(z,h)$ with the sieve Riesz representer $v^*_{n_T}$. It is indispensable for the validity of the asymptotic confidence interval. \citet{chen2015sievewald} adopted a similar assumption (Assumption 3.6) about the local quadratic approximation to the sample criterion difference. According to \citet{chen2015sieve}, Assumption \ref{Assumption A.1} is satisfied by both sieve M estimation and sieve minimum distance estimation under low level conditions.
\begin{assumptionA}[Assumption A.2 of \citet{chen2015sieve}]\label{assumption:A.2}
	Suppose that $\theta_o$ satisfies $G\left(\theta_o, h_o\right)=\mathbb{E}[g(Z,\theta_0,h_0)]=0$, that $\widehat{\theta}-\theta_o=o_p(1)$, and that (i) $\Gamma_1\left(\theta, h_o\right)=\frac{\partial \mathbb{E}[g(Z,\theta,h_0)]}{\partial \theta}$ exists in a neighborhood of $\theta_o$ and is continuous at $\theta_o, \Gamma_1^{\prime} W \Gamma_1$ is nonsingular, where $W$ is the weighting matrix of the second step GMM estimation; (ii) the pathwise derivative $\Gamma_2\left(\theta, h_o\right)\left[h-h_o\right]=\frac{\partial \mathbb{E}[g(Z,\theta,h_0+\tau (h-h_0))]}{\partial \tau}|_{\tau=0}$ exists in all directions $\left[h-h_o\right]$ and satisfies
	$$
	\left|\Gamma_2\left(\theta, h_o\right)\left[h-h_o\right]-\Gamma_2\left(\theta_o, h_o\right)\left[h-h_o\right]\right| \leq\left|\theta-\theta_o\right|\times o(1)
	$$
	for all $\theta$ with $\left|\theta-\theta_o\right|=o(1)$ and all $h$ with $\left\|h-h_o\right\|_{\mathcal{S}}=o(1)$; either (iii)
	$$
	\left|G\left(\theta, \widehat{h}\right)-G\left(\theta, h_o\right)-\Gamma_2\left(\theta, h_o\right)\left[\widehat{h}-h_o\right]\right|=o_p\left({n_T}^{-\frac{1}{2}}\right)
	$$
	for all $\theta$ with $\left|\theta-\theta_o\right|=o(1)$; or (iii)' there are some constants $c \geq 0, \epsilon_1>0, \epsilon_2>1$ such that
	$$
	\left|G(\theta, h)-G\left(\theta, h_o\right)-\Gamma_2\left(\theta, h_o\right)\left[h-h_o\right]\right| \leq c\left\|h-h_o\right\|_{\mathcal{S}}^{\epsilon_1}\left\|h-h_o\right\|^{\epsilon_2}
	$$
	for all $\theta$ with $\left|\theta-\theta_o\right|=o(1)$, all $h$ with $\left\|h-h_o\right\|_{\mathcal{S}}=o(1), c\left\|\widehat{h}-h_o\right\|_{\mathcal{S}}^{\epsilon_1}\left\| \widehat{h}-h_o\right\|^{\epsilon_2}=o_p\left({n_T}^{-\frac{1}{2}}\right)$;
	(iv) Denote $G_{n_T}(\theta,h)=\frac{1}{n_T}\sum_{i=1}^{n_T} g(Z_i,\theta,h)$. For all sequences of positive numbers $\left\{\kappa_{n_T}\right\}$ with $\kappa_{n_T}=o(1)$
	$$
	\sup _{\left|\theta-\theta_o\right|<\kappa_{n_T},\left\|h-h_o\right\|_{\mathcal{S}}<\kappa_{n_T}} \frac{\left|G_{n_T}(\theta, h)-G(\theta, h)-G_{n_T}\left(\theta_o, h_o\right)\right|}{{n_T}^{-1 / 2}+\left|G_{n_T}(\theta, h)\right|+|G(\theta, h)|}=o_p(1)
	$$
	(v) ${n_T}^{-\frac{1}{2}} \sum_{i=1}^{n_T}\left\{g\left(Z_i, \theta_o, h_o\right)+\Delta\left(Z_i, h_o\right)\left[v_{n_T}^*\right]\right\} \rightarrow_d \mathcal{N}\left(0, V_1\right)$.   
\end{assumptionA}
Assumption \ref{assumption:A.2} is about the regularity conditions of the pathwise derivative and the adjustment term. It controls the magnitude of the remainder term of the Taylor expansion which reflects the smoothness of the corresponding functional. It is also indispensable for the validity of the asymptotic normality result.
\begin{proof}[Proof of Theorem \ref{theorem: asymptotic normality}]
	Theorem \ref{theorem: asymptotic normality} is an application of Theorem 2.1 of \citet{chen2015sieve} to the problem of treatment group expectation estimation.
	
	We first verify that the first step nonparametric estimation of weights and outcome regression functions belongs to the class of Sieve M estimation. 
	
	Due to the fact that
	\begin{align}\label{equation: the expression of Q(h)}
		Q(h)&=-E\left[d(X,w(X))|D=0\right]-\sum_{i=1}^{N}\mathbb{E}[(Y-g_i(X))^2|A=1,D=i]\notag\\
		&=-E\left[\frac{I_{D=0}}{P(D=0)}d(X,w(X))\right]-\sum_{i=1}^{N}E\left[\frac{I_{A=1,D=i}}{P(A=1,D=i)}(Y-g_i(X))^2\right]\\
		&=E\left[\varphi_w(Z,w(X))+\sum_{i=1}^{N}\varphi_{g_i}(Z,g_i)\right],\notag
	\end{align}
	$\hat{Q}(h)$ is the sample analog of the criterion function of the sieve M estimation problem 
	\begin{align*}
		\underset{h}{\arg \sup}~E\left[\varphi_w(Z,w(X))+\sum_{i=1}^{N}\varphi_{g_i}(Z,g_i)\right].    
	\end{align*}
	
	Eqn.~\ref{equation: the expression of Q(h)} follows from the fact that
	\begin{align*}
		&\mathbb{E}\left[\frac{I_{D=0}}{P(D=0)}d(X,w(X))\right]\\
		&=P(D=0)\mathbb{E}\left[\frac{I_{D=0}}{P(D=0)}d(X,w(X))|D=0\right]+P(D\neq 0)\mathbb{E}\left[\frac{I_{D=0}}{P(D=0)}d(X,w(X))|D\neq 0\right]\\
		&=\mathbb{E}\left[d(X,w(X))|D=0\right]
	\end{align*}
	and \begin{align*}
		&\mathbb{E}\left[\frac{I_{A=1,D=i}}{P(A=1,D=i)}(Y-g_i(X))^2\right]\\
		&=P(A=1,D=i)\mathbb{E}\left[\frac{I_{A=1,D=i}}{P(A=1,D=i)}(Y-g_i(X))^2|A=1,D=i\right]\\
		&\quad +P(I_{A=1,D=i}=0)\mathbb{E}\left[\frac{I_{A=1,D=i}}{P(A=1,D=i)}(Y-g_i(X))^2|I_{A=1,D=i}=0\right]\\
		&=\mathbb{E}[(Y-g_i(X))^2|A=1,D=i].
	\end{align*}
	
	Since the linear sieve space $\mathcal{S}_{n_T}$ of weights and outcome regression functions are bounded, the sample analog of the criterion function $\hat{Q}(h)$ satisfies \begin{align*}
		\sup_{h\in \mathcal{S}_{n_T}}~\|\hat{Q}(h)-Q(h)\|=o_p(1).
	\end{align*}
	By Equation (21) of \citet{chen2015sieve}, the adjustment term of the first step sieve M estimation is defined as 
	\begin{align*}
		\Delta(Z,h)[v]=\frac{\partial \varphi(Z,h+\tau v)}{\partial \tau}|_{\tau=0}.
	\end{align*}

	According to \citet{chen2015sieve}, the first step sieve M estimator $\hat{h}$ satisfies Assumptions \ref{Assumption A.1}.  
	
	When we use the sample average estimator $\frac{\sum_{i=1}^{n_T}I_{D_i=0}}{n_T}$ for $\Pr(D=0)$, the second step GMM estimator of $\theta_0$ is equivalent to the synthetic treatment group estimator in Eqn.~\ref{equation: treatment group expectation estimator} since
	\begin{align*}
		&\hat{\theta}=\frac{1}{n_0}\sum_{j=1}^{n_0}\sum_{i=1}^{N}\hat{w}_i(x_{0,j})\hat{g}_i(x_{0,j}),\\
		&=\underset{\theta\in \mathcal{R}}{\arg \inf}~\left(\frac{1}{n_T}\sum_{i=1}^{n_T}g(Z_i,\theta,\hat{h})\right)'W\left(\frac{1}{n_T}\sum_{i=1}^{n_T}g(Z_i,\theta,\hat{h})\right)\\
		&=\underset{\theta\in \mathcal{R}}{\arg \inf}~\left[\frac{1}{n_T}\sum_{j=1}^{n_T}\left(\theta-E(X_j)\right)\right]'W\left[\frac{1}{n_T}\sum_{j=1}^{n_T}\left(\theta-E(X_j)\right)\right]
	\end{align*}
	for any weighting matrix $W$, where we let $E(x) = \frac{I_{D_j=0}}{\frac{\sum_{i=1}^NI_{D_i=0}}{n_T}}\sum_{i=1}^{N}w_i(x)g_i(x)$.
	
	Because $g(Z,\theta,h)=\theta-\frac{I_{D=0}}{P(D=0)}\sum_{i=1}^{N}w_i(X)g_i(X)$ is a linear function with respect to $\theta$ and a smooth function with respect to $h$, Assumption \ref{assumption:A.2} is naturally satisfied.

	Consequently, Theorem 3 follows from Theorem 3.1 of \citet{chen2015sieve}. 
\end{proof}

\subsection{Proof of Proposition \ref{proposition: asymptotic variance estimator}}\label{subsection: proof of proposition 6}

In order to deal with the weakly dependent data, \citet{chen2015sieve} use a kernel based estimator of $V_1$ as
$$
\widehat{V}_{1, n_T}=\sum_{i=-n_T+1}^{n_T-1} \mathcal{K}\left(\frac{i}{M_{n_T}}\right) \Upsilon_{n_T, i}\left(\widehat{\alpha}_{n_T}\right)\left[\widehat{v}_{n_T}^*, \widehat{v}_{n_T}^*\right],
$$
where $M_{n_T} \rightarrow \infty$ as $n_T \rightarrow \infty$, and
$$
\Upsilon_{n_T, i}\left(\widehat{\alpha}_{n_T}\right)\left[\widehat{v}_{n_T}^*, \widehat{v}_{n_T}^*\right]=\left\{\begin{array}{cl}
	\frac{1}{n_T} \sum_{l=i+1}^{n_T} \widehat{S}_{l, n_T}^* \widehat{S}_{l-i, n_T}^{* \prime} & \text { for } i \geq 0 \\
	\frac{1}{n_T} \sum_{l=-i+1}^{n_T} \widehat{S}_{l, n_T}^* \widehat{S}_{l+i, n_T}^{* \prime} & \text { for } i<0
\end{array}\right.
$$
with $\widehat{S}_{l, n_T}^*=\widehat{S}_l\left(\widehat{\alpha}_{n_T}\right)\left[\widehat{\mathbf{v}}_{n_T}^*\right]$ given in Equation (32) of \citet{chen2015sieve}.

In order for Proposition \ref{proposition: asymptotic variance estimator} to hold, we need the following regularity assumptions from \citet{chen2015sieve}.

\begin{assumptionA}[Assumption B.1 of \citet{chen2015sieve}]\label{Assumption B.1}
	The kernel function $\mathcal{K}(\cdot)$ is symmetric, continuous at zero, and satisfies $\mathcal{K}(0)=$ $1, \sup _x|\mathcal{K}(x)| \leq 1, \int_{\mathbb{R}}|\mathcal{K}(x)| d x<\infty$ and $\int_{\mathbb{R}}|\mathcal{K}(x)||x| d x<\infty$.    
\end{assumptionA}
Denote $\mathcal{N}_{n_T}$ as a local shrinking neighborhood of $h_o:$
\begin{align*}
	\left\{h \in \mathcal{S}_{n_T}\right.\colon\left.\left\|h-h_o\right\|_{\mathcal{S}} \leq \delta_{s, n_T},\left\|h-h_o\right\| \leq \delta_{n_T}\right\},    
\end{align*}
where $\delta_{s, n_T}=o(1)$ and $\delta_{n_T}=o\left(n_T^{-\frac{1}{4}}\right)$.

Denote
\begin{align*}
	S_i(\alpha)[v]=g\left(Z_i, \alpha\right)+\Delta\left(Z_i, h\right)[v] \quad \text { and } \quad \widehat{S}_i(\alpha)[v]=g\left(Z_i, \alpha\right)+\widehat{\Delta}\left(Z_i, h\right)[v]. 
\end{align*}

Let $\;\mathcal{U}_{n_T}=\left\{v \in \mathcal{V}_{n_T}:\|v\|=1\right\}$ and $\delta_{w, n_T}=o(1)$ be a positive sequence such that
\begin{enumerate}
	\item $\sup _{v_1, v_2 \in \mathcal{U}_{n_T}}\left|\left\langle v_1, v_2\right\rangle_{n_T}-\left\langle v_1, v_2\right\rangle\right|=O_p\left(\delta_{w, n}\right)$, where $\left\langle \cdot, \cdot\right\rangle_{n_T}$ is the empirical inner product induced by the empirical semi-norm in Equation (36) of \citet{chen2015sieve};
	\item $\sup _{\alpha \in \mathcal{N}_{n_T}, v \in \mathcal{U}_{n_T}}\left|\Gamma_{2,n}(\theta, h)[v]-\Gamma_{2}\left(\theta_o, h_o\right)[v]\right|=O_p\left(\delta_{w, n_T}\right)$, where the empirical version of the Gateaux derivative $\Gamma_{2, n}$ follows from Equation (35) of \citet{chen2015sieve};
	\item $\lim _{n_T \rightarrow \infty}\left\|v_{n_T}^*\right\|>0$ and $\lim _{n_T \rightarrow \infty} \left\|v_{n_T}^*\right\|<\infty$.
\end{enumerate}

According to \citet{chen2015sieve}, any sieve extremum estimation in the first step satisfies the above regularity conditions.

\begin{assumptionA}[Assumption B.2 of \citet{chen2015sieve}]\label{assumption:B.2}
	Let $\alpha=(\theta,h)$ and $\alpha=(\theta_0,h_0)$. (i) $\left\{Z_i\right\}$ is a strictly stationary strong mixing process with mixing coefficient $\alpha_i$ satisfying $\sum_{i=0}^{\infty} \alpha_i^{2(1 / r-1 / p)}<\infty$ for some $r \in(2,4]$ and some $p>r$; (ii) $\sup _{n_T}\left|S_i\left(\alpha_o\right)\left[v^*_{n_T}\right]\right|<\infty$ and $\sup _{n_T} \mathbb{E}\left[\sup _{v \in \mathcal{U}_{n_T}}\left|\Delta\left(Z, h_o\right)[v]\right|^2\right]<\infty$; (iii) There is a positive sequence $\delta_{n_T}^*=o(1)$ such that
	$$
	E\left[\sup _{\alpha \in \mathcal{N}_{n_T}, v \in \mathcal{U}_{n_T}}\left|S_i(\alpha)[v]-S_i\left(\alpha_o\right)[v]\right|^2\right]=O\left(\delta_{n_T}^* \delta_{w, n_T}\right)$$
	(iv) $\sup _{h \in \mathcal{N}_{n_T}, v \in \mathcal{U}_{n_T}} \frac{1}{n_T} \sum_{i=1}^{n_T}\left|\widehat{\Delta}\left(Z_i, h\right)[v]-\Delta\left(Z_i, h\right)[v]\right|^2=O_p\left(\delta_{n_T}^* \delta_{w, n_T}\right)$;\\
	(v) $M_{n_T} \times \max \left(\delta_{w, n_T}, \delta_{n_T}^*\right)=o(1)$ and $n^{-1 / 2+1 / r} M_{n_T}=o(1)$.
\end{assumptionA}
According to \citet{chen2015sieve}, when the data are independent and identically distributed, we only need $\delta^*_{n_T}\delta_{w,n_T}=o(1)$ for Assumption \ref{assumption:B.2}(iii)(iv)(v) to hold.
\begin{proof}[Proof of Proposition \ref{proposition: asymptotic variance estimator}]
	Proposition \ref{proposition: asymptotic variance estimator} is an application of Theorem 3.1 and Theorem 5.2 of \citet{chen2015sieve} to the treatment group expectation estimation. 
	
	Because $\{Z_i\}_{i=1}^{n_T}$ are independent and identically distributed, we choose the kernel function $\mathcal{K}(x)=1_{x\in[-1,1]}$ which leads to the asymptotic variance estimator $\hat{V}_{\theta}=\frac{1}{n_T}\sum_{i=1}^{n_T}(\hat{S}_{i,P}-\frac{1}{n_T}\sum_{j=1}^{n_T}\hat{S}_{j,P})^2.$
	
	According to Lemma D.1 of \citet{chen2015sieve}, we can establish the numerical equivalence of the adjustment term estimator based on parametric belief and the corresponding estimator under the nonparametric specification.
	\begin{align*}
		\widehat{\Delta}\left(Z, \widehat{h}\right)\left[\widehat{v}_{n_T}^*\right]&=\hat{\Gamma}_{2,w}(\hat{R}_w)^{-1}\frac{\partial \hat{\varphi}_{w,P}(Z,\hat{\beta}_w)}{\partial \beta_w}+\sum_{i=1}^{N}\hat{\Gamma}_{2,g_i}(\hat{R}_{g_i})^{-1}\frac{\partial \varphi_{g_i,P}(Z,\hat{\beta}_{g_i})}{\partial \beta_{g_i}}\\
		&=\hat{\Delta}_P(Z,\hat{\beta}_{w},\hat{\beta}_{g})[\hat{v}_P].
	\end{align*}
	
	Consequently, the asymptotic variance estimator based on parametric belief is numerically equivalent to the corresponding estimator under the nonparametric specification according to Theorem 5.2 of \citet{chen2015sieve}.
	
	When $\{Z_i\}_{i=1}^{n_T}$ are independent and identically distributed, we only need $\delta^*_{n_T}\delta_{w,n_T}=o(1)$ for Assumption \ref{assumption:B.2}(iii)(iv)(v) to hold. In other words, we only need to demonstrate the consistency of the estimator without concerning its convergence rate. Since the sieve score $S_i(\alpha)[v]$ is a smooth function with respect to $\alpha$ and $v$, Assumption \ref{assumption:B.2} (iii) is satisfied.
	
	\citet{li2021improving} obtain the optimal $O(\frac{1}{n})$ learning rates up to a logarithmic factor of conditional mean embedding estimators under certain regularity conditions. 
	By the continuous mapping theorem, we have
	\begin{align*}
		[\hat{A}(x)]_{i,j}&=\langle\hat{\mu}_{Y(0)|X=x,D=i},\hat{\mu}_{Y(0)|X=x,D=j}\rangle_\mathcal{G}\overset{p}{\to}[A(x)]_{i,j}=\langle\mu_{Y(0)|X=x,D=i},\mu_{Y(0)|X=x,D=j}\rangle_\mathcal{G},\\
		\left[\hat{b}(x)\right]_i&=\langle\hat{\mu}_{Y(0)|X=x,D=i},\hat{\mu}_{Y(0)|X=x,D=0}\rangle_\mathcal{G}\overset{p}{\to}\left[b(x)\right]_i=\langle\mu_{Y(0)|X=x,D=i},\mu_{Y(0)|X=x,D=0}\rangle_\mathcal{G}.
	\end{align*}
	Consequently, Assumption \ref{assumption:B.2} (iv) holds such that \begin{align*}
		\sup _{h \in \mathcal{N}_{n_T}, v \in \mathcal{U}_{n_T}} \frac{1}{n_T} \sum_{i=1}^{n_T}\left|\widehat{\Delta}\left(Z_i, h\right)[v]-\Delta\left(Z_i, h\right)[v]\right|^2=o_p\left(1\right)	    
	\end{align*}
	
	According to Theorem 3.1 of \citet{chen2015sieve}, we obtain that
	\begin{align*}
		\hat{V}_{\theta,P}\overset{p}{\to}V_{\theta}.
	\end{align*}
	
	Based on Theorem \ref{theorem: asymptotic normality} and the Slutsky theorem, we have that
	\begin{align*}
		\sqrt{n_T}\hat{V}_{\theta}^{-\frac{1}{2}}(\hat{\theta}-\theta_0)\overset{d}{\to} N(0,1).
	\end{align*}
\end{proof}

\section{Additional Experimental Results}\label{subsection: simulated experiments}

As mentioned in the main text, we consider three source populations such that $N=3$. We first generate the covariate data. For each source population, we generate the covariate variable $X\overset{\mathrm{iid}}{\sim}\mbox{Uniform}(-1,3)$. For simplicity, we let the treatment group and the control group share the same covariate distribution. Because the target population has some different properties from the source populations, we generate the covariate variable of the target population $X\overset{\mathrm{iid}}{\sim}\mbox{Truncnorm}(mean=0,sd=1,a=-1,b=3)$, where $\mbox{Truncnorm}(mean=0,sd=1,a=-1,b=3)$ denotes the truncated normal distribution from the standard normal distribution with minimum value -1 and maximum value 3.


To evaluate the performance under a wide range of experimental conditions, we generate the parameters of the data generating process from the normal distribution for each iteration. For simplicity, we set the sample size of the treatment group of each source population to be 4000. The sample size of the control group of each source population and the target population varies from 2000 to 4000. For each sample size specification, we repeat the experiments 100 times. In other words, we set the total number of iterations $M=100$. Each time we generate the parameters of the data generating process from the normal distribution. 

After we generate the covariate data, we generate the control group data based on them. For the mixture coefficients $\pi_{i,j}(x)=\frac{exp(e_{i,j}x+f_{i,j})}{\sum_{k=1}^{m}exp(e_{i,k}x+f_{i,k})}$, we set $e_{i,j}=0$ and $f_{i,j}=\ln{(0.8)}\times 1_{i= j}+\ln{(0.1)}\times 1_{i\neq j}$. In other words, if $i=j$ then $\pi_{i,j}(x)=0.8$; if $i\neq j$ then $\pi_{i,j}(x)=0.1$. Consequently, the control group distributions of the source populations have a common set of mixture components, but they have different mixture weights such that
\begin{align*}
	Y(0)|X=x,D=1\sim 0.8N(a_1x+b_1,1)+0.1N(a_2x+b_2,1)+0.1N(a_3x+b_3,1),\\
	Y(0)|X=x,D=2\sim 0.1N(a_1x+b_1,1)+0.8N(a_2x+b_2,1)+0.1N(a_3x+b_3,1),\\
	Y(0)|X=x,D=1\sim 0.8N(a_1x+b_1,1)+0.1N(a_2x+b_2,1)+0.1N(a_3x+b_3,1).\\
\end{align*}
For the component distribution $N(a_jx+b_j,1)$, we let $a_j, b_j\overset{\mathrm{iid}}{\sim}N(0,15^2)$ for each round of experiment, where $j=1, \ldots, m$.

We follow the steps mentioned in the main text to generate the treatment group data through the counterfactual expectation transition mechanism. For the counterfactual expectation transition mechanism 
\begin{align*} 
	Y(1)=\mathbb{E}[Y(1)|Y(0)=y,X=x,D=0]+\epsilon=g_1y+g_2x+g_3xy+\epsilon,		
\end{align*}
we let $g_1, g_2, g_3\overset{\mathrm{iid}}{\sim}N(0,10^2)$ and $\epsilon\overset{\mathrm{iid}}{\sim}N(0,1)$ for each round of experiment.

As for the target population, recall that the control potential outcome is generated by the mixture normal distribution
\begin{equation*}
Y(0)|X=x,D=0\sim\sum_{j=1}^{m}(\sum_{i=1}^{N}w_i(x)\pi_{i,j}(x))N(a_jx+b_j,1).
\end{equation*}
For the weight $w_i(x)=\frac{exp(c_ix+d_i)}{\sum_{j=1}^{N}exp(c_jx+d_j)}$, we set $c_i \overset{\mathrm{iid}}{\sim}N(0,1)$ and $d_i \overset{\mathrm{iid}}{\sim}N(0,1.5^2)$ for each round of experiment, where $i=1, \ldots, N$.

We use the Gaussian kernel to estimate the conditional mean embedding and set the hyperparameter $\lambda$ in Eqn.~\ref{equation: notations} to 0.01. We employ the R package: kernlab \citep{Karatzoglou2004kernlabA}. For weight estimation, we adopt the B-spline basis functions with order 3. For outcome regression estimation, we use the B-spline basis functions with order 3 and 2 knots. When we apply the \textbf{Point} method, we replace the sieve extremum estimator $\hat{w}_i(x)$ in Eqn.~\ref{equation: treatment group expectation estimator} with the pointwise weight estimator in Theorem \ref{proposition: estimation}. 


Due to the space constraint, in the main text, we only present the weight estimation performance of the first population in Figure \ref{fig:weight estimation}. In Figures \ref{fig:the second source population} and \ref{fig:the third source population}, we present the weight estimation results of the second and the third population, where the X-axis depicts the Covariate variable $X$ and the Y-axis depicts the weight $w_i(x)$. Recall that the label "source" represents the true value of the weight function. For a better comparison, we combine the weight estimation results of Figures \ref{fig:weight estimation}, \ref{fig:the second source population} and \ref{fig:the third source population} into Figure \ref{fig:Weight estimators for all source populations}. In Figure \ref{fig:Weight estimators for all source populations}, the label "source\_1" refers to the true value of the weight function of the first source population and the label "sieve\_1" refers to the weight estimator based on the sieve extremum estimator of the first source population. 

\begin{figure}
   \begin{subfigure}{0.42\textwidth}
   \includegraphics[width=\linewidth]{"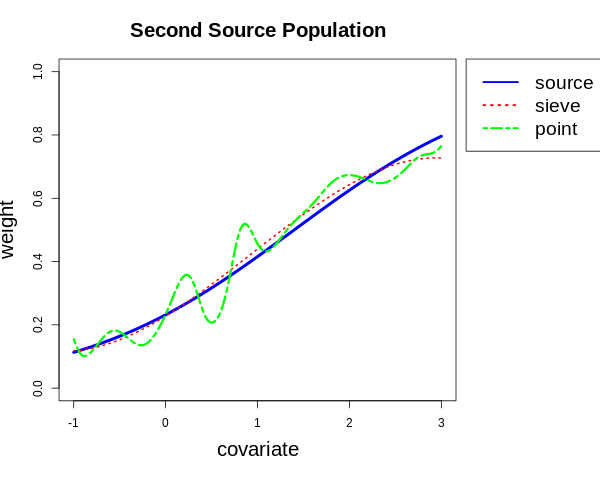"}
    \caption{The second source population.}
    \label{fig:the second source population}
   \end{subfigure}
   \hfill
   \begin{subfigure}{0.42\textwidth}
      \includegraphics[width=\linewidth]{"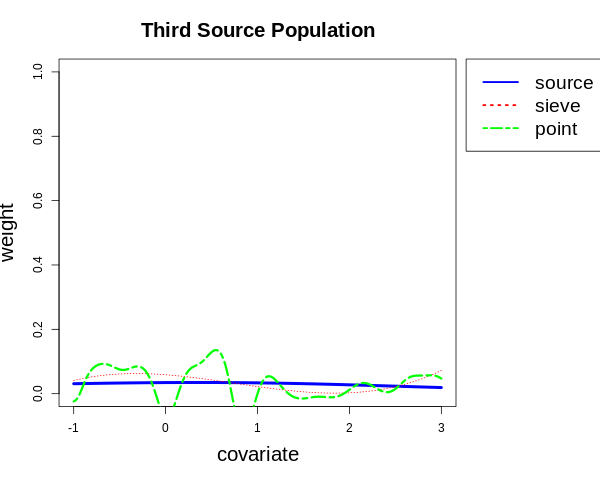"}
      \caption{The third source population.}
      \label{fig:the third source population}
   \end{subfigure}
   \hfill
   \begin{subfigure}{0.42\textwidth}
      \includegraphics[width=\linewidth]{"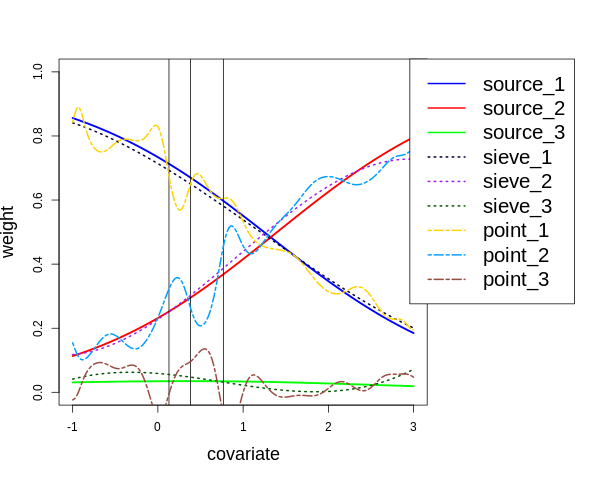"}
      \caption{Weight estimators for all source populations.}
      \label{fig:Weight estimators for all source populations}
   \end{subfigure}
   
   \caption{Weight estimators for the source populations. Figure \ref{fig:weight estimation} in the main text only depicts the weight estimators for the first population. In this figure we show the weight estimators for the other two source populations and combine all the results into Figure \ref{fig:Weight estimators for all source populations}.}
\end{figure}

\begin{figure}
	\centering
	\includegraphics[width=0.45\textwidth]{"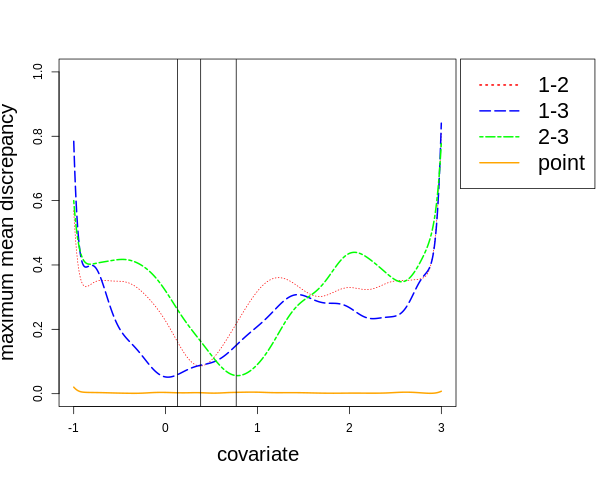"}
	\caption{Estimated CMMD between source populations. The label "point" refers to the CMMD between the weighted source populations and the target population. The label "1\_2" refers to the CMMD between the first population and the second population.}\label{fig:Estimated CMMD between source populations.}
\end{figure}

We compare the performance of the pointwise weight estimation based on the constrained weights (with the constraint that the weights are nonnegative and adds up to one) and unconstrained weights in Figure \ref{fig:comparison between constrained weights and unconstrained weights}, where the label "source" refers to the true value of the weights. As illustrated in Figure \ref{fig:comparison between constrained weights and unconstrained weights}, the pointwise weight estimation method based on constrained weights (the dashed lines) and unconstrained weights (the dotted lines) give similar results in most situations. Consequently, the synthetic treatment group estimators based on them have similar performances. 

Furthermore, we depict the estimated CMMD between different source populations in Figure \ref{fig:Estimated CMMD between source populations.}, which is calculated by Eqn.~\ref{equation: CMMD calculation}. In Figure \ref{fig:Estimated CMMD between source populations.}, the label "1-2" represents the CMMD between the first source population and the second source population.

As mentioned above, for each round of the experiments we generate the parameters of the data generating process from the normal distribution in order to evaluate the performance under a wide range of experimental conditions. Due to the space constraint, in the main text, we only present the weight estimation results under a particular set of parameters of the data generating process. In Figures \ref{fig:Weight estimators for the first source population}, \ref{fig:Weight estimators for the second source population} and \ref{fig:Weight estimators for the third source population}, we present additional experimental results to demonstrate the weight estimation performance under a different parameter specification.

\begin{figure}
	\centering
\includegraphics[width=0.55\textwidth]{"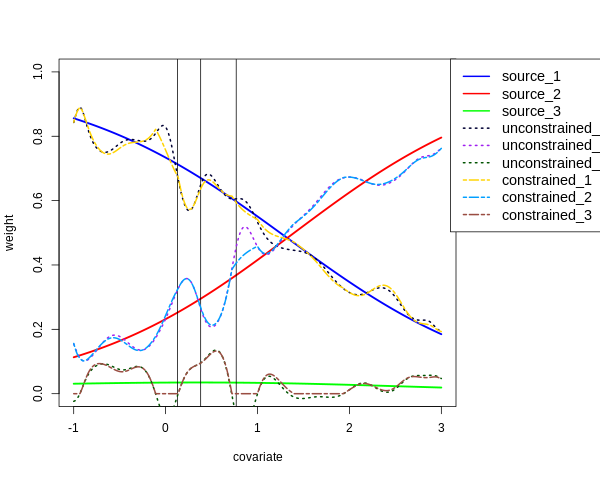"}
	\caption{Comparison between pointwise weight estimators based on constrained weights and unconstrained weights. The label "contrained\_1" refers to the constrained weight estimator for the first population. The label "source\_1" represents the true value of the weight for the first source population.}\label{fig:comparison between constrained weights and unconstrained weights}
\end{figure}

\begin{figure}[h]
   \begin{subfigure}{0.45\textwidth}
      \includegraphics[width=\linewidth]{"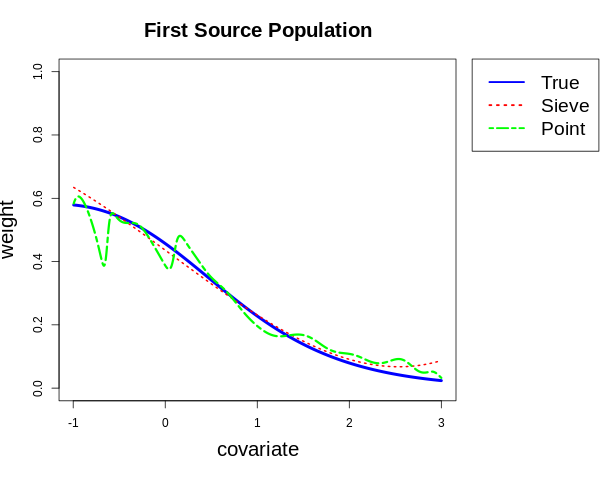"}
      \caption{The first source population.}
      \label{fig:Weight estimators for the first source population}
   \end{subfigure}
   \hfill
   \begin{subfigure}{0.45\textwidth}
      \includegraphics[width=\linewidth]{"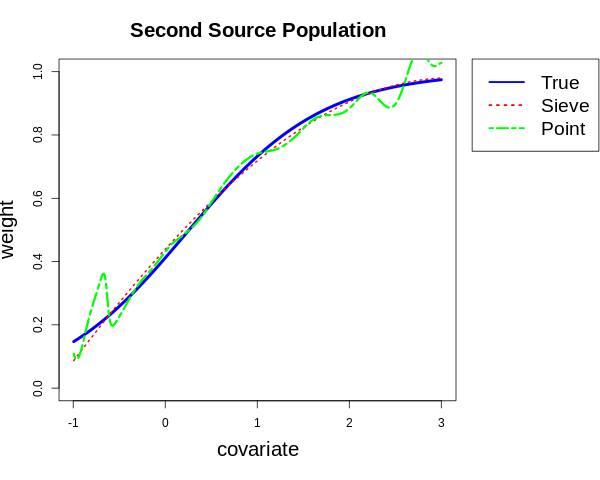"}
      \caption{The second source population.}
      \label{fig:Weight estimators for the second source population}
   \end{subfigure}
   \hfill
   \begin{subfigure}{0.45\textwidth}
       \includegraphics[width=\linewidth]{"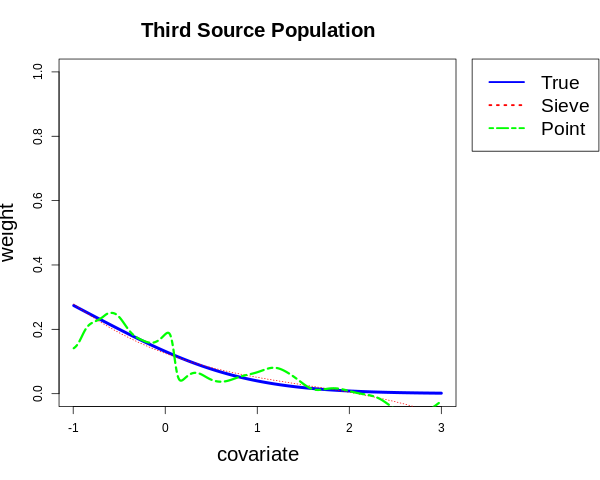"}
       \caption{The third source population.}
       \label{fig:Weight estimators for the third source population}
   \end{subfigure}
   \hfill
   \begin{subfigure}{0.45\textwidth}
       \includegraphics[width=\linewidth]{"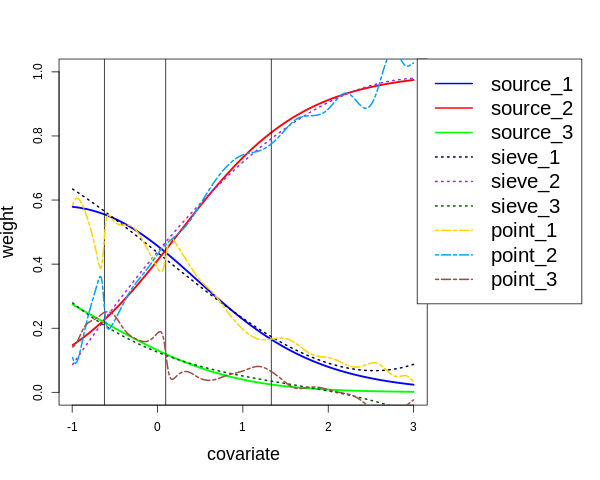"}
       \caption{Combining the results for all source population.}
       \label{fig:Weight estimators for all source population.}
   \end{subfigure}
   \caption{Weight estimators under a different parameter specification.}
\end{figure}

For a better comparison, we combine the weight estimation results under a different parameter specification from Figures \ref{fig:Weight estimators for the first source population}, \ref{fig:Weight estimators for the second source population} and \ref{fig:Weight estimators for the third source population} into Figure \ref{fig:Weight estimators for all source population.}. As shown in Figure \ref{fig:Weight estimators for all source population.}, the weight estimators based on the sieve extremum estimation method lead to smooth results (see the dotted lines) while the pointwise weight estimators result in wiggly results (see the dashed line). It confirms with the finding that the sieve extremum estimation method can alleviate the problem of overfitting caused by the pointwise weight estimation.

The CMMD between two source populations can reflect the difficulty of the weight estimation. Figure \ref{fig:Estimated CMMD between the control group distributions of different source populations.} depicts the estimated CMMD between different source populations, where the label "1-2" refers to the estimated CMMD between the first source population and the second source population. As shown by the red dotted curve in Figure \ref{fig:Estimated CMMD between the control group distributions of different source populations.}, the CMMD between the first and the second source population approaches to zero near the left vertical line. It indicates that they have similar control group distributions near the left vertical line where it is difficult to distinguish them. Correspondingly, in Figure \ref{fig:Weight estimators for all source population.}, the pointwise weight estimators of the first population (yellow dashed line) and the second source population (blue dashed line) oscillate drastically near the left vertical line. Similarly, in Figure \ref{fig:Estimated CMMD between the control group distributions of different source populations.}, the CMMD between the first source population and the third source population approaches to zero near the middle vertical line. It corresponds to the fact that in Figure \ref{fig:Weight estimators for all source population.} the pointwise weight estimators of the first source population (yellow dashed line) and the third source population (brown dashed line) become increasingly wiggly near the middle vertical line. In contrast, in Figure \ref{fig:Weight estimators for all source population.}, the weight estimator based on the sieve extremum estimation (dotted curves) remains smooth near the left and the middle vertical lines, which demonstrates that the sieve extremum estimation method can alleviate the problem of overfitting.

\begin{figure}[H]
	\centering
	\includegraphics[width=0.5\textwidth]{"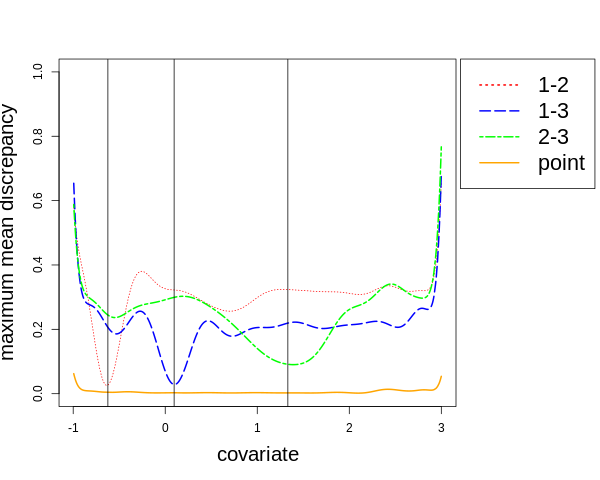"}
	\caption{Estimated CMMD between the control group distributions of different source populations under a different parameter specification.}\label{fig:Estimated CMMD between the control group distributions of different source populations.}
\end{figure}

Since the ultimate goal is to estimate the treatment group expectation $\theta_0$, the weight estimation is an intermediate step. The minor deviation of the weight estimator from the true value in Figure \ref{fig:Weight estimators for all source population.} doesn't necessarily mean a bad thing. If two control group distributions are similar to each other, then it is difficult to estimate the corresponding weights. But as long as the sum of these two weights approximates the true value, the resulting estimator still has a reasonably good performance.

\end{document}